\definecolor{codegreen}{rgb}{0,0.6,0}
\definecolor{codegray}{rgb}{0.5,0.5,0.5}
\definecolor{codepurple}{rgb}{0.58,0,0.82}
\definecolor{backcolour}{rgb}{1.0,1.0,1.0}
\lstdefinestyle{mystyle}{
    backgroundcolor=\color{backcolour},   
    commentstyle=\color{codegreen},
    keywordstyle=\color{magenta},
    numberstyle=\tiny\color{codegray},
    stringstyle=\color{codepurple},
    basicstyle=\ttfamily\footnotesize,
    breakatwhitespace=false,         
    breaklines=true,                 
    captionpos=b,                    
    keepspaces=true,                 
    numbers=left,                    
    numbersep=5pt,                  
    showspaces=false,                
    showstringspaces=false,
    showtabs=false,                  
    tabsize=2
}
\title{Domain Generalization via Nuclear Norm Regularization}
\author{
  Zhenmei Shi\thanks{Equal contribution.}~,~~ Yifei Ming$^*$,~~ Ying Fan$^*$,~~ Frederic Sala,~~ Yingyu Liang\\
University of Wisconsin-Madison\\
\{\texttt{zhmeishi,alvinming,yingfan,fredsala,yliang\}@cs.wisc.edu}
%
}
\begin{document}

\maketitle

\begin{abstract}
The ability to generalize to unseen domains is crucial for machine learning systems deployed in the real world, especially when we only have data from limited training domains. In this paper, we propose a simple and effective regularization method based on the nuclear norm of the learned features for domain generalization. Intuitively, the proposed regularizer mitigates the impacts of environmental features and encourages learning domain-invariant features. Theoretically, we provide insights into why nuclear norm regularization is more effective compared to ERM and alternative regularization methods. Empirically, we conduct extensive experiments on both synthetic and real datasets. We show nuclear norm regularization achieves strong performance compared to baselines in a wide range of domain generalization tasks. Moreover, our regularizer is broadly applicable with various methods such as ERM and SWAD with consistently improved performance, e.g., 1.7\% and 0.9\%  test accuracy improvements respectively on the DomainBed benchmark. 
\end{abstract}

\section{Introduction}

Making machine learning models reliable under distributional shifts is crucial for real-world applications such as autonomous driving, health risk prediction, and medical imaging. This motivates the area of domain generalization, which aims to obtain models that generalize to unseen domains, e.g., different image backgrounds or different image styles, by learning from a limited set of training domains. 
To improve model robustness under domain shifts, a plethora of algorithms have been recently proposed~\cite{zhang2018mixup,bahng2020learning, wang2020heterogeneous, luo2020generalizing, yao2022improving}. In particular, methods that learn invariant feature representations  (class-relevant patterns) or invariant predictors~\cite{arjovsky2019invariant} across domains demonstrate promising performance both empirically and theoretically~\cite{ganin2016domain, li2018deep, sun2016deep, li2018domain}.  
Despite this, it remains challenging to improve on empirical risk minimization (ERM) when evaluating a broad range of real-world datasets~\cite{gulrajani2021in, koh2021wilds}. Notice that ERM is a reasonable baseline method since it must use invariant features to achieve optimal in-distribution performance. It has been empirically shown \cite{rosenfeld2022domain} that ERM already learns ``invariant" features sufficient for domain generalization, which means these features are only correlated with the class label, not domains or environments.

Although competitive in domain generalization tasks, the main issue ERM faces is that the invariant features it learns can be arbitrarily mixed: environmental features are hard to disentangle from invariant features. Various regularization techniques that control empirical risks across domains have been proposed~\cite{arjovsky2019invariant,krueger2020out,rosenfeld2021risks}, but few directly regularize ERM, motivating this work.
One desired property to improve ERM is disentangling the invariant features from the mixtures. As low-dimensional structures prevail in deep learning, a natural way to achieve this is to identify the subset of solutions from ERM with minimal information retrieved from training domains by controlling the rank. This parsimonious method may avoid domain overfitting. We are interested in the following question: 
\begin{center}
\textit{Can ERM benefit from rank regularization of the extracted feature for better domain generalization?}
\end{center}
To answer this question, we propose a simple yet effective algorithm, {ERM-NU} (Empirical Risk Minimization with Nuclear Norm Regularization), for improving domain generalization without acquiring domain annotations. Our method is inspired by works in low-rank matrix completion and recovery with nuclear norm  minimization~\cite{candes2006quantitative,candes2011robust,chandrasekaran2011rank,candes2012exact,korlakai2014graph, gu2014weighted}.
Given feature representations from pre-trained models via ERM, ERM-NU aims to extract class-relevant (domain-invariant) features and to rule out spurious (environmental) features by fine-tuning the network with nuclear norm regularization. Specifically, we propose to minimize the nuclear norm of the backbone features, which is a convex envelope to the rank of the feature matrix~\cite{recht2010guaranteed}. 


\begin{figure*}[h] 
\centering
\begin{subfigure}[b]{0.48\textwidth}
 \centering
 \includegraphics[width=\textwidth]{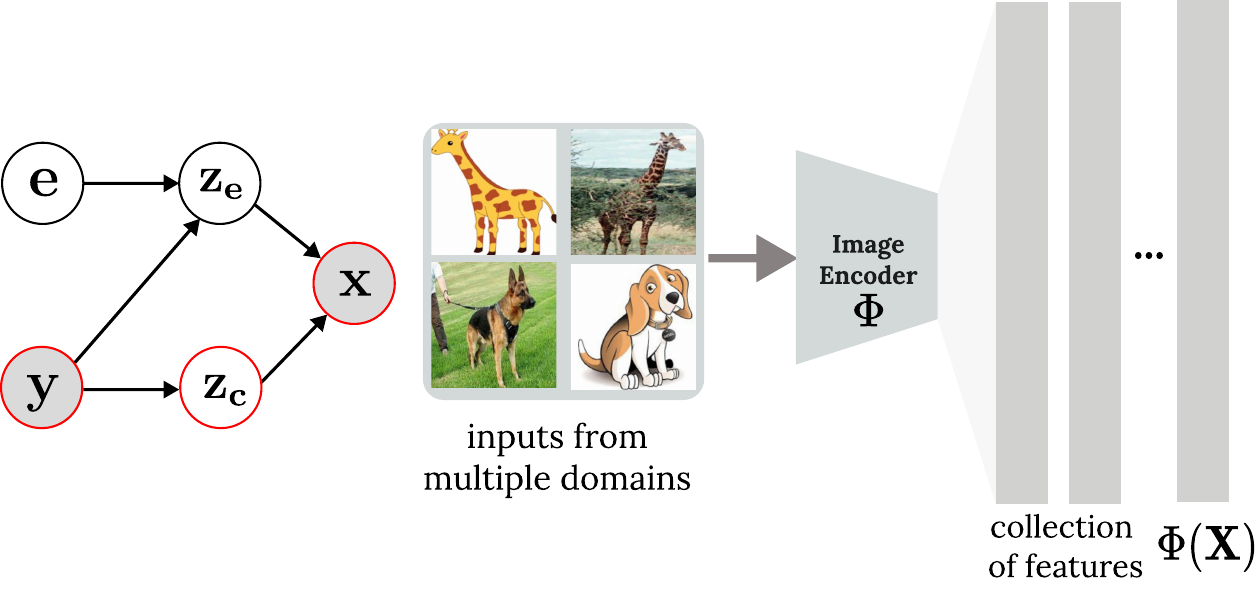}
 \caption{\footnotesize Left: Causal graph inspired by~\cite{rosenfeld2021risks}. Shaded variables are observed. Right: Training pipeline. We collect input features from multiple domains.}
 \label{fig:teaser_l}
\end{subfigure}
\hspace{1em}
\begin{subfigure}[b]{0.48\textwidth}
\centering
\includegraphics[width=\textwidth]
{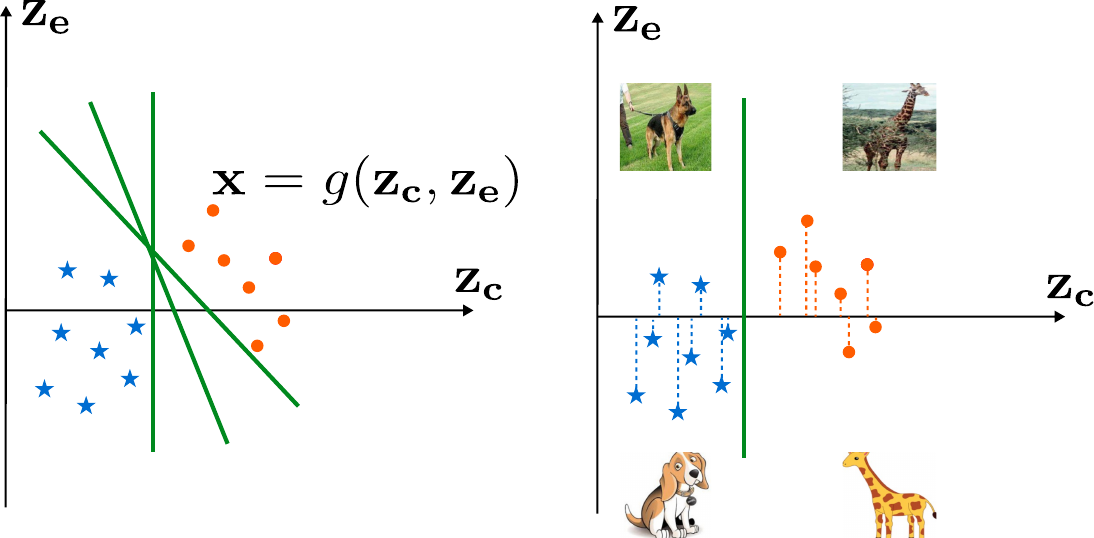}
 \caption{\footnotesize Left: ERM Solutions (green lines). Right: ERM solution with the smallest nuclear norm of extracted feature ($\mathbf{z_c}$ only).}
 \label{fig:teaser_r}
\end{subfigure}
\caption{\footnotesize Causal graph of our data assumption~(\ref{fig:teaser_l}), and the effect of nuclear norm regularization in ERM~(\ref{fig:teaser_r}) where we use a linear $g$ for a simple illustration. In Figure~\ref{fig:teaser_l}, $\zb_e$ has a spurious correlation to $y$, while $\zb_c$ only depends on $y$. From Figure~\ref{fig:teaser_r}, nuclear norm regularization can select a subset of ERM solutions that extract the smallest possible information (in the sense of rank) from $\mathbf{x}$ for classification, which can reduce the effect of environmental features for better generalization performance while still preserving high classification accuracy.}
\label{fig: before}
\end{figure*}

Our main contributions and findings are as follows:
\begin{itemize}[itemsep=0.8pt,topsep=0.8pt,parsep=0.8pt,partopsep=0.8pt]
    \item {\bf ERM-NU offers competitive empirical performance}: We evaluate the performance of ERM-NU on synthetic datasets and five benchmark real-world datasets. Despite its simplicity, NU demonstrates strong performance and improves on existing methods on some large-scale datasets such as TerraInc and DomainNet.
    \item {\bf We provide theoretical insights} when applying ERM-NU to domain generalization tasks: We show that even training with infinite data from in-domain (ID) tasks on a specific data distribution, ERM with weight decay may perform worse than random guessing on out-of-domain (OOD) tasks, while ERM with bounded rank (corresponding to ERM-NU) can guarantee 100\% test accuracy on the out-of-domain task.
    \item {\bf Nuclear norm regularization (NU) is simple, efficient, and broadly applicable}: NU is computationally efficient as it does not require annotations from training domains. As a regularization, NU is also potentially orthogonal to other methods that are based on ERM: we get a consistent improvement of NU on ERM, Mixup~\cite{yan2020improve} and SWAD~\cite{cha2021swad} as baselines.
\end{itemize}
 








\section{Method}\label{sec:method}


\subsection{Preliminaries}
We use $\mathcal{X}$ and $\mathcal{Y}$ to denote the input and label space, respectively. Following ~\cite{koh2021wilds,yao2022improving, rosenfeld2021risks}, we consider data distributions consisting of environments (domains) $\mathcal{E} =\{1, \ldots, E\}$. For a given environment $e \in \mathcal{E}$ and label $y\in\mathcal{Y}$, the data generation process is the following: latent \emph{environmental/spurious} features (e.g., image style or background information) $\zb_e$ and \emph{invariant/class-relevant} features (e.g., windows pattern for house images) $\zb_c$ are sampled where invariant features only depend on $y$, while environmental features depend on $e$ and $y$ (i.e., environmental features and the label may have spurious correlations), $\zb_c \perp \zb_e$. The input data is generated from the latent features 
    $\xb = g(\zb_c,\zb_e)$
by some injective function $g$. 
See illustration in Figure~\ref{fig:teaser_l}.
We assume that the training data is drawn from a mixture of $E^{tr} \subset \mathcal{E} $ domains and test data is drawn from some unseen domain in $E^{ts} \subset \mathcal{E} $. In the domain shift setup, training domains are disjoint from test domains: $\mathcal{E}^{t r} \cap \mathcal{E}^{t s}=\emptyset$. In this work, as we do not require domain annotations for training data, we remove notation involving $\mathcal{E}$ for simplicity and denote the training data distribution as $\cD_{\text{id}}$ and the unseen domain test data distribution as $\cD_{\text{ood}}$.
We consider population risk. Our objective is to learn a feature extractor $\Phi: \cX \rightarrow \mathbb{R}^d$ that maps input data to a $d$-dimensional feature embedding (usually fine-tuned from a pre-trained backbone, e.g. ResNet~\cite{he2016deep} pre-trained on ImageNet) and a classifier $\hat f$ to minimize the risk on \emph{unseen} environments, 
$
    \cL( \hat f, \Phi):= \EE_{(\xb,y)\sim \cD_{\text{ood}}}\left[\ell(\hat f(\Phi(\xb)), y) \right],
$
where the function $\ell$ can be any loss appropriate to classification, e.g., cross-entropy.
The \emph{nuclear/trace norm}~\cite{fan1951maximum} of a matrix is the sum of the singular values of the matrix. Suppose a matrix $ \mathbf{M} \in \mathbb{R}^{m \times n}$, we have the nuclear norm 
\begin{align*}
    \|\mathbf{M}\|_*:= \sum_{i}^{\min\{m,n\}} \sigma_i(\mathbf{M}),
\end{align*}
where $\sigma_i(\mathbf{M})$ is the $i$-th largest singular value. From~\cite{recht2010guaranteed}, we know the nuclear norm is the tightest convex envelope
of the rank function of a matrix within the unit ball, i.e., the nuclear norm is smaller than the rank when the operator norm (spectral norm) $\|\mathbf{M}\|_2 = \sigma_1(\mathbf{M}) \le 1$. As the matrix rank function is highly non-convex, nuclear norm regularization is often used in optimization to achieve a low-rank solution, as it has good convergence guarantees, while the rank function does not. 



\subsection{Method description}
\paragraph{Intuition.} Intuitively, to guarantee low risk on $\cD_{\text{ood}}$, $\Phi$ needs to rely only on invariant features for prediction. It must not use environmental features in order to avoid spurious correlations to ensure domain generalization. 
As environmental features depend on the label $y$ and the environment $e$ in Figure~\ref{fig:teaser_l}, our main hypothesis is that \textit{ environmental features have a lower correlation with the label than the invariant features}.
If our hypothesis is true, we can eliminate environmental features by constraining the rank of the learned representations from the training data while minimizing the empirical risk, i.e., the invariant features will be preserved (due to empirical risk minimization) and the environmental features will be removed (due to rank minimization). 
 
\paragraph{Objectives.} We consider fine-tuning the backbone (feature extractor) $\Phi$ with a linear prediction head.  Denote the linear head as $\ab \in \RR^{d\times m}$, where $m$ is the class number.  
The goal of ERM is to minimize the expected risk
  $\cL(\ab, \Phi) := \EE_{(\xb,y)\sim \cD_{\text{in}}}\left[\ell(\ab^\top \Phi(\xb), y) \right].$

Consider the latent vector ${\Phi(\xb)} \in \mathbb{R}^d$. This vector may contain both environment-related and class-relevant features. In order to obtain just the class-relevant features, we would like for $\Phi$ to extract as little information as possible while simultaneously optimizing the ERM loss. See illustration in Figure~\ref{fig:teaser_r}. 
Note that, we assume that the correlation between environmental features and labels is lower than the correlation between invariant features and labels.
Let $\Xb$ be a batch of training data points (batch size $>d$). To minimize information and so rule out environmental features, we minimize the rank of $\Phi(\Xb)$. Our objective is 
\begin{align}\label{eq:rank_loss}
  \min_{\ab, \Phi} \cL(\ab, \Phi) +  \lambda \text{rank}(\Phi(\Xb)).
\end{align}
As the nuclear norm is a convex envelope to the rank of a matrix, our convex relaxation objective is
\begin{align}\label{eq:rank_loss_nuclear}
  \min_{\ab, \Phi} \cL(\ab, \Phi) +  \lambda \|\Phi(\Xb)\|_*,
\end{align}
where $\lambda$ is the regularization weight. Finally, we use Equation~(\ref{eq:rank_loss_nuclear}) as our main loss function. 

\paragraph{Takeaways.} We summarize the advantages of nuclear norm minimization as follows:
\begin{itemize}[itemsep=0.8pt,topsep=0.8pt,parsep=0.8pt,partopsep=0.8pt]
    \item Simple and efficient: Our method can be easily implemented, e.g., NU only needs two more lines of code as shown below. Also, the model inference speed doesn't change after training. 
    \item Broadly applicable: Without requiring domain labels, our method can be used in conjunction with a broad range of existing domain generalization algorithms.
    \item Empirically effective and theoretically sound: Our method demonstrates promising performance on synthetic and real-world tasks (Section~\ref{sec:exp}) with theoretical insights (Section~\ref{sec:theory}).
\end{itemize}


\begin{lstlisting}[language=Python]
def forward(self, x, y):
    f = self.featurizer(x) # get feature embedding
    loss = F.cross_entropy(self.classifier(f), y) # get classification loss
    _,s,_ = torch.svd(f) # singular value decomposition
    loss += self.lambda * torch.sum(s) # add nuclear norm regularization
    return loss
\end{lstlisting}

\section{Experiments}
\label{sec:exp}

In this section, we start by presenting a synthetic task in Section~\ref{sec:synthetic} to help visualize the effects of nuclear norm regularization. Next, in Section~\ref{sec:real-world}, we demonstrate the effectiveness of our approach with real-world datasets. We provide further discussions and ablation studies in Section~\ref{sec:ablation}.

\subsection{Synthetic tasks}\label{sec:synthetic}
To visualize the effects of nuclear norm regularization, we start with a synthetic dataset with binary labels and two-dimensional inputs. Our expectation is that un-regularized ERM will perform well for in-domain (ID) data but struggle for out-of-domain (OOD) data, whereas nuclear norm-regularized ERM will excel in both settings. Assume inputs $\mathbf{x}=[\mathbf{x}_1, \mathbf{x}_2]$, where $\mathbf{x}_1$ is the invariant feature and $\mathbf{x}_2$ is the environmental feature. Specifically, $\mathbf{x}_1$ is drawn from a uniform distribution conditioned on $y\in\{-1, 1\}$ for both ID (training) and OOD (test) datasets:
$$\mathbf{x}_1 \mid y = 1 \sim \mathcal{U}[0,1], \quad \mathbf{x}_1 \mid y =-1 \sim \mathcal{U}[-1,0] $$

\begin{figure}[ht]
\begin{center}
\begin{subfigure}[b]{0.48\textwidth}
 \centering
 \includegraphics[width=\textwidth]{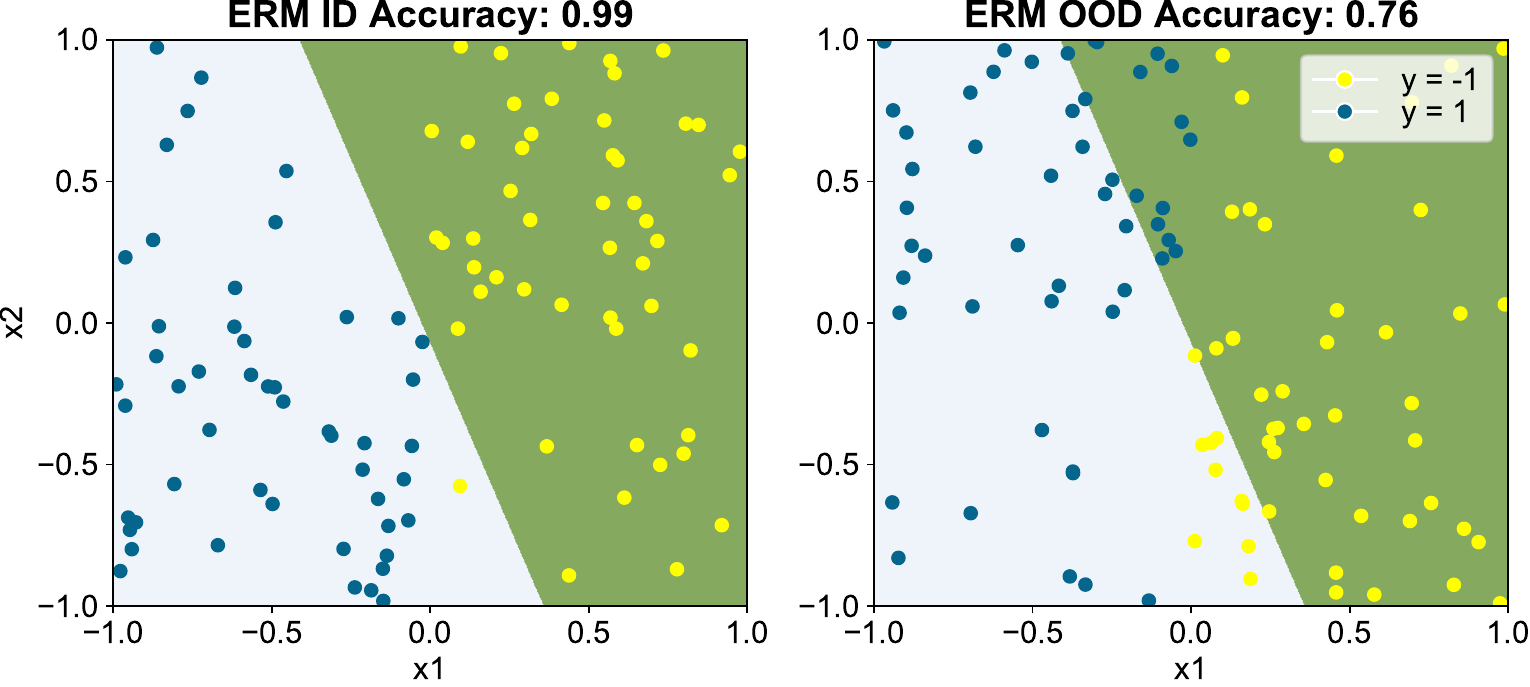}
\end{subfigure}
\begin{subfigure}[b]{0.48\textwidth}
 \centering
 \includegraphics[width=\textwidth]{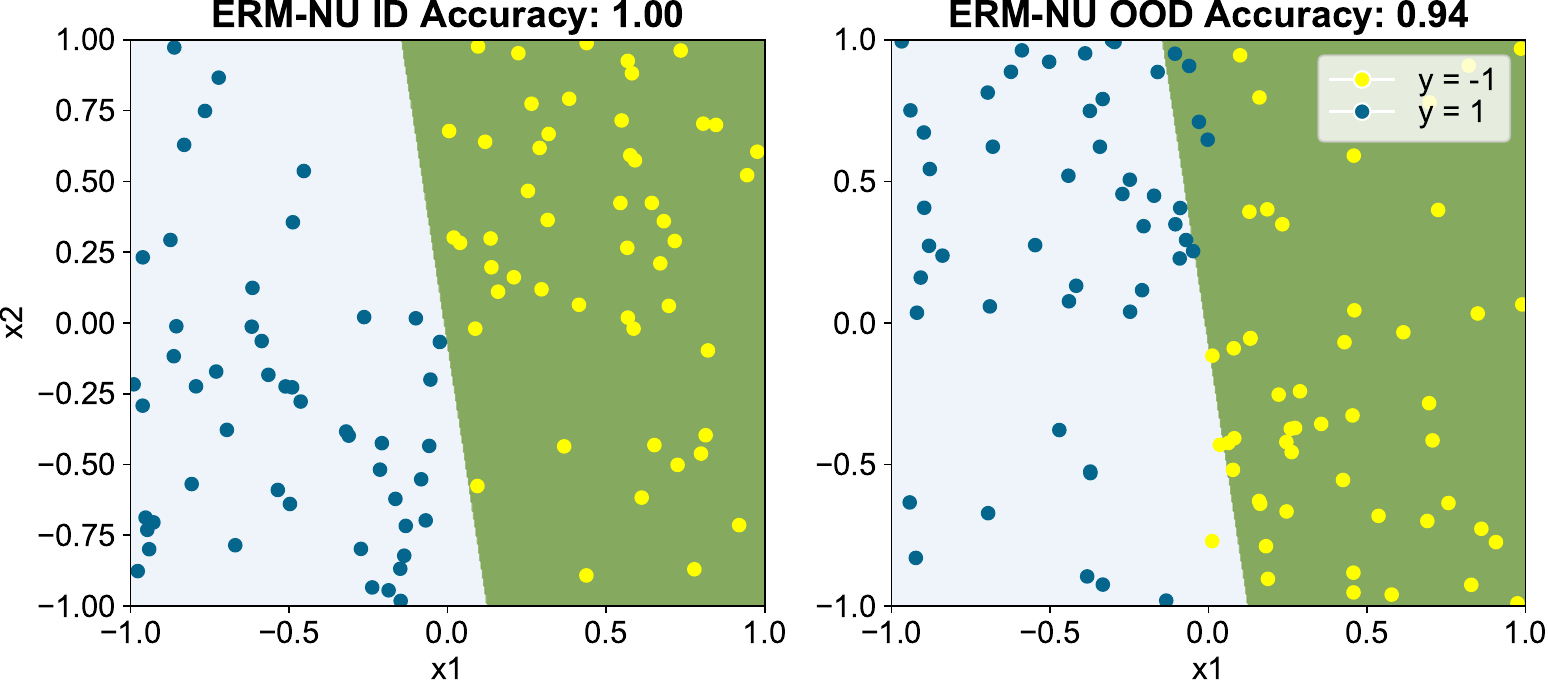}
\end{subfigure}

\end{center}
\vspace{-0.08in}
\caption{\footnotesize ID and OOD classification results with ERM and ERM-NU on the synthetic dataset with two classes (shown in yellow and navy blue). We visualize the decision boundary. While the model achieves nearly perfect accuracy on ID training set, the performance drastically degrades on the OOD test set. Nuclear norm regularization significantly reduces the OOD error rate.}
\label{fig:erm_syn}
\end{figure}

\begin{table*}[htb]

\begin{center}
\adjustbox{max width=0.88\textwidth}{%
\begin{tabular}{l|ccccc|c}
\toprule
 {Algorithm}  &  {VLCS}  &  {PACS}  &  {OfficeHome}  &  {TerraInc}  &  {DomainNet}  &  {Average}  \\
\midrule
MMD$^\dagger$~{\footnotesize (CVPR 18)}~\cite{li2018domain}   & 77.5 $\pm$ 0.9  & 84.6 $\pm$ 0.5  & 66.3 $\pm$ 0.1  & 42.2 $\pm$ 1.6  & 23.4 $\pm$ 9.5  & 58.8  \\
Mixstyle$^\ddagger$~{\footnotesize (ICLR 21)}~\cite{zhou2021domain}   & 77.9 $\pm$ 0.5  & 85.2 $\pm$ 0.3  & 60.4 $\pm$ 0.3  & 44.0 $\pm$ 0.7  & 34.0 $\pm$ 0.1  & 60.3  \\
GroupDRO$^\dagger$~{\footnotesize (ICLR 19)}~\cite{sagawa2019distributionally}  & 76.7 $\pm$ 0.6  & 84.4 $\pm$ 0.8  & 66.0 $\pm$ 0.7  & 43.2 $\pm$ 1.1  & 33.3 $\pm$ 0.2  & 60.7  \\
IRM$^\dagger$~{\footnotesize (ArXiv 20)}~\cite{arjovsky2019invariant}  & 78.5 $\pm$ 0.5  & 83.5 $\pm$ 0.8  & 64.3 $\pm$ 2.2  & 47.6 $\pm$ 0.8  & 33.9 $\pm$ 2.8  & 61.6  \\
ARM$^\dagger$~{\footnotesize (ArXiv 20)}~\cite{zhang2020adaptive} & 77.6 $\pm$ 0.3  & 85.1 $\pm$ 0.4  & 64.8 $\pm$ 0.3  & 45.5 $\pm$ 0.3  & 35.5 $\pm$ 0.2  & 61.7  \\
VREx$^\dagger$~{\footnotesize (ICML 21)}~\cite{krueger2020out}  & 78.3 $\pm$ 0.2  & 84.9 $\pm$ 0.6  & 66.4 $\pm$ 0.6  & 46.4 $\pm$ 0.6  & 33.6 $\pm$ 2.9  & 61.9  \\
CDANN$^\dagger$~{\footnotesize (ECCV 18)}~\cite{li2018deep}  & 77.5 $\pm$ 0.1  & 82.6 $\pm$ 0.9  & 65.8 $\pm$ 1.3  & 45.8 $\pm$ 1.6  & 38.3 $\pm$ 0.3  & 62.0\\
AND-mask$^\ast$~{\footnotesize (ICLR 20)}\cite{parascandolo2020learning}   & 78.1 $\pm$ 0.9  & 84.4 $\pm$ 0.9  & 65.6 $\pm$ 0.4  & 44.6 $\pm$ 0.3  & 37.2 $\pm$ 0.6  & 62.0  \\
DANN$^\dagger$~{\footnotesize (JMLR 16)}~\cite{ganin2016domain}  &  {78.6} $\pm$ 0.4  & 83.6 $\pm$ 0.4  & 65.9 $\pm$ 0.6  & 46.7 $\pm$ 0.5  & 38.3 $\pm$ 0.1  & 62.6  \\
RSC$^\dagger$~{\footnotesize (ECCV 20)}~\cite{huang2020self} & 77.1 $\pm$ 0.5  & 85.2 $\pm$ 0.9  & 65.5 $\pm$ 0.9  & 46.6 $\pm$ 1.0  & 38.9 $\pm$ 0.5  & 62.7  \\
MTL$^\dagger$~{\footnotesize (JMLR 21)}~\cite{blanchard2021domain}  & 77.2 $\pm$ 0.4  & 84.6 $\pm$ 0.5  & 66.4 $\pm$ 0.5  & 45.6 $\pm$ 1.2  & 40.6 $\pm$ 0.1  & 62.9  \\
Mixup$^\dagger$~{\footnotesize (ICLR 18)}~\cite{zhang2018mixup}  & 77.4 $\pm$ 0.6  & 84.6 $\pm$ 0.6  &  {68.1} $\pm$ 0.3  & 47.9 $\pm$ 0.8  & 39.2 $\pm$ 0.1  & 63.4  \\
MLDG$^\dagger$~{\footnotesize (AAAI 18)}~\cite{li2018learning}   & 77.2 $\pm$ 0.4  & 84.9 $\pm$ 1.0  & 66.8 $\pm$ 0.6  & 47.7 $\pm$ 0.9  & 41.2 $\pm$ 0.1  & 63.6  \\
Fish~{\footnotesize (ICLR 22)}~\cite{shi2022gradient}  & 77.8 $\pm$ 0.3  & 85.5 $\pm$ 0.3  & 68.6 $\pm$ 0.4  & 45.1 $\pm$ 1.3  &  {42.7} $\pm$ 0.2  & 63.9 \\
Fishr$^\ast$~{\footnotesize (ICML 22)}~\cite{rame2022fishr}  & 77.8 $\pm$ 0.1  & 85.5 $\pm$ 0.4  & 67.8 $\pm$ 0.1  & 47.4 $\pm$ 1.6  &  {41.7} $\pm$ 0.0  & 64.0 \\
SagNet$^\dagger$~{\footnotesize (CVPR 21)}~\cite{nam2021reducing}  & 77.8 $\pm$ 0.5  &  {86.3} $\pm$ 0.2  &  {68.1} $\pm$ 0.1  &  {48.6} $\pm$ 1.0  & 40.3 $\pm$ 0.1  & 64.2  \\
SelfReg~{\footnotesize (ICCV 21)}~\cite{kim2021selfreg}  & 77.8 $\pm$ 0.9  & 85.6 $\pm$ 0.4  & 67.9 $\pm$ 0.7  & 47.0 $\pm$ 0.3  & 41.5 $\pm$ 0.2  & 64.2  \\
CORAL$^\dagger$~{\footnotesize (ECCV 16)}~\cite{sun2016deep}  &  {78.8} $\pm$ 0.6  &  {86.2} $\pm$ 0.3  &  {68.7} $\pm$ 0.3  & 47.6 $\pm$ 1.0  & 41.5 $\pm$ 0.1  &  {64.5}  \\
SAM$^\ddagger$~{\footnotesize (ICLR 21)}~\cite{foret2021sharpnessaware}  & 79.4 $\pm$ 0.1  & 85.8 $\pm$ 0.2  & 69.6 $\pm$ 0.1  & 43.3 $\pm$ 0.7  & 44.3 $\pm$ 0.0  & 64.5  \\
mDSDI~{\footnotesize (NeurIPS 21)}~\cite{bui2021exploiting}  & 79.0 $\pm$ 0.3  & 86.2 $\pm$ 0.2  & 69.2 $\pm$ 0.4  & 48.1 $\pm$ 1.4  &  {42.8} $\pm$ 0.1  & 65.1 \\
MIRO~{\footnotesize (ECCV 22)}~\cite{cha2022domain}  & 79.0 $\pm$ 0.0  & 85.4 $\pm$ 0.4  & 70.5 $\pm$ 0.4  & 50.4 $\pm$ 1.1  &  {44.3} $\pm$ 0.2  & 65.9 \\
\midrule
\midrule
ERM$^\dagger$~\cite{vapnik1999overview}  & 77.5 $\pm$ 0.4  & 85.5 $\pm$ 0.2  & 66.5 $\pm$ 0.3  & 46.1 $\pm$ 1.8  & 40.9 $\pm$ 0.1  & 63.3  \\
\textbf{ERM-NU (ours)}  & \textbf{78.3} $\pm$ 0.3  & \textbf{85.6} $\pm$ 0.1  &  \textbf{68.1} $\pm$ 0.1  &  \textbf{49.6} $\pm$ 0.6  &\textbf{43.4} $\pm$ 0.1  &  \textbf{65.0}\\
\midrule
SWAD$^\ddagger$~{\footnotesize (NeurIPS 21)}~\cite{cha2021swad}  & 79.1 $\pm$ 0.1  & {88.1} $\pm$ 0.1  &  {70.6} $\pm$ 0.2  &  {50.0} $\pm$ 0.3  &  46.5 $\pm$ 0.1  &  66.9\\
\textbf{SWAD-NU (ours)}  & \textbf{79.8} $\pm$ 0.2  & \textbf{88.5} $\pm$ 0.2  &  \textbf{71.3} $\pm$ 0.3  &  \textbf{52.2} $\pm$ 0.3  &  \textbf{47.1} $\pm$ 0.1  &  \textbf{67.8}\\
\bottomrule
\end{tabular}}

\end{center}
    \caption{\footnotesize OOD accuracy for five realistic domain generalization datasets. 
    The results marked by $^\dagger$, $^\ddagger$, $^\ast$ are the reported numbers from \cite{gulrajani2021in}, \cite{cha2021swad}, \cite{rame2022fishr} respectively. We highlight \textbf{our methods} in bold. The results of Fish, SelfReg,
    mDSDI and MIRO are the reported ones from each paper. Average accuracy and standard errors
    are reported from three trials. 
    Nuclear norm regularization is simple, effective, and broadly applicable. It significantly improves the performance over ERM and a competitive baseline SWAD across all datasets considered.
    }
    \vspace{-0.08in}
    \label{tab:domain_bed}
\end{table*}

The environmental feature $\mathbf{x}_2$ also follows a uniform distribution but is conditioned on both $y$ and a Bernoulli random variable $b \sim \text{Ber}(0.7)$. For ID data,  $\mathbf{x}_2 \mid y = 1 \sim \mathcal{U}[0,1]$ with probability (w.p.) 0.7, while $\mathbf{x}_2 \mid y = 1 \sim \mathcal{U}[-1,0]$ w.p. 0.3. In contrast, for OOD data, $\mathbf{x}_2 \mid y = 1 \sim \mathcal{U}[-1,0]$ w.p. 0.7, and $\mathbf{x}_2 \mid y = 1 \sim \mathcal{U}[0,1]$ w.p. 0.3. We provide a more general setting in Section~\ref{sec:theory}.

We visualize the ID and OOD datasets in Figure~\ref{fig:erm_syn}, where samples from $y=-1$ and $y=1$ are shown in yellow and navy blue dots, respectively. We consider a simple linear feature extractor $\Phi(\mathbf{x}) = A\mathbf{x}$ with $A\in\mathbb{R}^{2\times 2}$. The models are trained with ERM and ERM-NU objectives using gradient descent until convergence. To better illustrate the effects of nuclear norm minimization, we show the decision boundary along with the accuracy on ID and OOD datasets. For ID dataset, training with both objective yield nearly perfect accuracy. For OOD dataset, the model trained with ERM only achieves an accuracy of 0.76, as a result of utilizing the environmental feature. In contrast, training with ERM-NU successfully mitigates the reliance on environmental features and significantly improves the OOD accuracy to 0.94. In Section~\ref{sec:theory}, we further provide theoretical analysis to better understand the effects of nuclear norm regularization.

\subsection{Real-world tasks} \label{sec:real-world}

We demonstrate the effects of nuclear norm regularization across real-world datasets and compare them with a broad range of algorithms.

\begin{wrapfigure}{r}{0.5\textwidth}
\begin{center}
\includegraphics[width=1.0\linewidth]{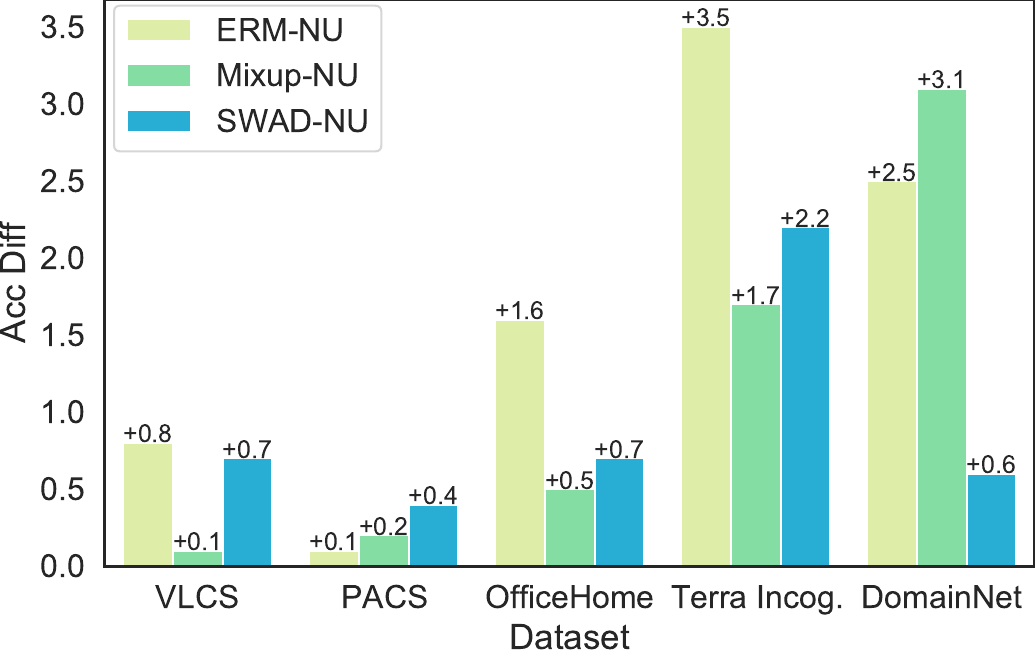}
\end{center}
\caption{\footnotesize Nuclear norm regularization enhances competitive baselines across a range of realistic datasets, as demonstrated by the average difference in accuracy for ERM, Mixup, and SWAD. Detailed results for individual datasets can be seen in Table~\ref{tab:split}.}
\label{fig:avg_improvement}
\vspace{-0.1in}
\end{wrapfigure}
\paragraph{Experimental setup.} Nuclear norm regularization is simple, flexible, and can be plugged into ERM-like algorithms. To verify its effectiveness, we consider adding the regularizer over ERM and SWAD (dubbed as ERM-NU and SWAD-NU, respectively). For a fair comparison with baseline methods, we evaluate our algorithm on the DomainBed testbed~\cite{gulrajani2021in}, an open-source benchmark that aims to rigorously compare different algorithms for domain generalization. The testbed consists of a wide range of datasets for multi-domain image classification tasks, including 
PACS~\cite{li2017deeper}  (4 domains, 7 classes, 9,991 images), VLCS~\cite{fang2013unbiased} (4 domains, 5
classes, 10,729 images), Office-Home~\cite{venkateswara2017deep} (4 domains, 65 classes, 15,500 images), Terra Incognita~\cite{beery2018recognition}  (4 domains, 10 classes, 24,788 images), and DomainNet~\cite{peng2019moment} (6 domains, 345 classes, 586,575 images). Following the evaluation protocol in DomainBed, we report all performance
scores by ``leave-one-out cross-validation'', where averaging over cases
that use one domain as the test (OOD) domain and all others as the training
(ID) domains. 
For the model selection criterion, we use the ``training-domain validation set''  strategy, which refers to choosing the model maximizing the accuracy on the overall validation set, 20\% of
training domain data.
For each dataset and model, we report the test domain accuracy of the best-selected model (average over three independent runs with different random seeds). 
Following common practice, we use ResNet-50~\cite{he2016deep} as the feature backbone. 
We use the output features of the penultimate layer of ResNet-50 
for nuclear norm regularization and fine-tune the whole model. 
The default value of weight scale $\lambda$ is set as $0.01$ and distributions for random search as $10^{\text {Uniform }(-2.5,-1.5)}$. The default batch size is $32$ and the distribution for random search is $2^{\text {Uniform }(5,6)}$.  During training, we perform batch-wise nuclear norm regularization, similar to ~\cite{arjovsky2019invariant} which uses batch-wise statistics for invariant risk minimization. 


\begin{wraptable}{r}{0.5\textwidth}
\vspace{-0.08in}
\begin{subtable}[h]{\linewidth}
\begin{center}
\adjustbox{max width=\linewidth}{%
\begin{tabular}{lccccc}
\toprule
 {Algorithm}   &  {C}   &  {L}   &  {S}   &  {V}   &  { Average} \\
\midrule
ERM   & 97.7 $\pm$ 0.4   & 64.3 $\pm$ 0.9   & 73.4 $\pm$ 0.5   & 74.6 $\pm$ 1.3   & 77.5  \\
ERM-NU& 97.9 $\pm$ 0.4   & {65.1} $\pm$ 0.3   & 73.2 $\pm$ 0.9   & 76.9 $\pm$ 0.5   & \textbf{78.3}  \\
\midrule
Mixup & 98.3 $\pm$ 0.6   & {64.8} $\pm$ 1.0   & 72.1 $\pm$ 0.5   & 74.3 $\pm$ 0.8   & 77.4  \\
Mixup-NU&  97.9 $\pm$ 0.2   & 64.1 $\pm$ 1.4   & 73.1 $\pm$ 0.9   & 74.8 $\pm$ 0.5   & \textbf{77.5}  \\
\midrule
SWAD  & {98.8} $\pm$ 0.1   & 63.3 $\pm$ 0.3   & {75.3} $\pm$ 0.5   & {79.2} $\pm$ 0.6   & {79.1}  \\
SWAD-NU& {99.1} $\pm$ 0.4   & 63.6 $\pm$ 0.4   & {75.9} $\pm$ 0.4   & {80.5} $\pm$ 1.0   & \textbf{79.8}  \\
\bottomrule
\end{tabular}}
\end{center}
\vspace{-0.08in}
\caption{
\footnotesize  VLCS
}
\end{subtable}

\begin{subtable}[h]{\linewidth}
\begin{center}
\adjustbox{max width=\linewidth}{%
\begin{tabular}{lccccc}
\toprule
 {Algorithm}   &  {A}   &  {C}   &  {P}   &  {S}   &  { Average} \\
\midrule
ERM   & 84.7 $\pm$ 0.4   & 80.8 $\pm$ 0.6   & 97.2 $\pm$ 0.3   & 79.3 $\pm$ 1.0   & 85.5  \\
ERM-NU   & 87.4 $\pm$ 0.5   & 79.6 $\pm$ 0.9   & 96.3 $\pm$ 0.7   & 79.0 $\pm$ 0.5   & \textbf{85.6}  \\
\midrule
Mixup & 86.1 $\pm$ 0.5   & 78.9 $\pm$ 0.8   & {97.6} $\pm$ 0.1   & 75.8 $\pm$ 1.8   & 84.6  \\
Mixup-NU & 86.7 $\pm$ 0.3   & 78.0 $\pm$ 1.3   & 97.3 $\pm$ 0.3   & 77.3 $\pm$ 2.0   &\textbf{84.8}  \\
\midrule
SWAD   & {89.3} $\pm$ 0.2   & {83.4} $\pm$ 0.6   & 97.3 $\pm$ 0.3   & {82.5} $\pm$ 0.5   & {88.1}  \\
SWAD-NU   & {89.8} $\pm$ 1.1   & {82.8} $\pm$ 1.0   & {97.7} $\pm$ 0.3   & {83.7} $\pm$ 1.1   & \textbf{88.5}  \\
\bottomrule
\end{tabular}}
\end{center}
\vspace{-0.08in}
\caption{
\footnotesize  PACS
}
\end{subtable}

\begin{subtable}[h]{\linewidth}
\begin{center}
\adjustbox{max width=\linewidth}{%
\begin{tabular}{lccccc}
\toprule
 {Algorithm}   &  {A}   &  {C}   &  {P}   &  {R}   &  { Average} \\
\midrule
ERM   & 61.3 $\pm$ 0.7   & 52.4 $\pm$ 0.3   & 75.8 $\pm$ 0.1   & 76.6 $\pm$ 0.3   & 66.5  \\
ERM-NU    & 63.3 $\pm$ 0.2   & 54.2 $\pm$ 0.3   & 76.7 $\pm$ 0.2   & 78.2 $\pm$ 0.3   & \textbf{68.1}  \\
\midrule
Mixup & 62.4 $\pm$ 0.8   & 54.8 $\pm$ 0.6   & 76.9 $\pm$ 0.3   & 78.3 $\pm$ 0.2   & 68.1  \\
Mixup-NU  & 64.3 $\pm$ 0.5   & 55.9 $\pm$ 0.6   & 76.9 $\pm$ 0.4   & 78.0 $\pm$ 0.6   & \textbf{68.8}  \\
\midrule
SWAD    & {66.1} $\pm$ 0.4   & {57.7} $\pm$ 0.4   & {78.4} $\pm$ 0.1   & {80.2} $\pm$ 0.2   & {70.6}  \\
SWAD-NU  & {67.5} $\pm$ 0.3   & {58.4} $\pm$ 0.6   & {78.6} $\pm$ 0.9   & {80.7} $\pm$ 0.1   & \textbf{71.3}  \\ 
\bottomrule
\end{tabular}}
\end{center}
\vspace{-0.08in}
\caption{
\footnotesize  OfficeHome
}
\end{subtable}

\begin{subtable}[h]{\linewidth}
\begin{center}
\adjustbox{max width=\linewidth}{%
\begin{tabular}{lccccc}
\toprule
 {Algorithm}   &  {L100}   &  {L38}    &  {L43}    &  {L46}    &  {Average}    \\
\midrule
ERM    & 49.8 $\pm$ 4.4  & 42.1 $\pm$ 1.4  & 56.9 $\pm$ 1.8  & 35.7 $\pm$ 3.9  & 46.1   \\
ERM-NU   & 52.5 $\pm$ 1.2  &  {45.0} $\pm$ 0.5  &  {60.2} $\pm$ 0.2  &  {40.7} $\pm$ 1.0  &  \textbf{49.6}   \\
\midrule
Mixup  &  {59.6} $\pm$ 2.0  & 42.2 $\pm$ 1.4  & 55.9 $\pm$ 0.8  & 33.9 $\pm$ 1.4  & 47.9   \\
Mixup-NU &  {55.1} $\pm$ 3.1  & {45.8} $\pm$ 0.7  & 56.4 $\pm$ 1.2  & {41.1} $\pm$ 0.6  & \textbf{49.6}   \\
\midrule
SWAD  & 55.4 $\pm$ 0.0  & 44.9 $\pm$ 1.1  & 59.7 $\pm$ 0.4  & 39.9 $\pm$ 0.2  & {50.0}   \\
SWAD-NU &  {58.1} $\pm$ 3.3  & {47.7} $\pm$ 1.6  & {60.5} $\pm$ 0.8  & {42.3} $\pm$ 0.9  & \textbf{52.2}   \\
\bottomrule
\end{tabular}}
\end{center}
\vspace{-0.08in}
\caption{
\footnotesize  Terra Incognita
}
\end{subtable}

\begin{subtable}[h]{\linewidth}
\begin{center}
\adjustbox{max width=\linewidth}{%
\begin{tabular}{lccccccc}
\toprule
 {Algorithm}   &  {clip}  &  {info}  &  {paint} &  {quick} &  {real}  &  {sketch}   &  {Average}   \\
\midrule
ERM  & 58.1 $\pm$ 0.3 & 18.8 $\pm$ 0.3 & 46.7 $\pm$ 0.3 & 12.2 $\pm$ 0.4 & 59.6 $\pm$ 0.1 & 49.8 $\pm$ 0.4 & 40.9 \\
ERM-NU  &  {60.9} $\pm$ 0.0 &  {21.1} $\pm$ 0.2 &  {49.9} $\pm$ 0.3 &  {13.7} $\pm$ 0.2 &  {62.5} $\pm$ 0.2 &  {52.5} $\pm$ 0.4 &  \textbf{43.4} \\
\midrule
Mixup & 55.7 $\pm$ 0.3 & 18.5 $\pm$ 0.5 & 44.3 $\pm$ 0.5 & 12.5 $\pm$ 0.4 & 55.8 $\pm$ 0.3 & 48.2 $\pm$ 0.5 & 39.2 \\
Mixup-NU & 59.5 $\pm$ 0.3 & 20.5 $\pm$ 0.1 & 49.3 $\pm$ 0.4 & 13.3 $\pm$ 0.5 & 59.6 $\pm$ 0.3 & 51.5 $\pm$ 0.2 & \textbf{42.3} \\
\midrule
SWAD  & {66.0} $\pm$ 0.1 & {22.4} $\pm$ 0.3 & {53.5} $\pm$ 0.1 & {16.1} $\pm$ 0.2 &  {65.8} $\pm$ 0.4 & {55.5} $\pm$ 0.3 & {46.5} \\
SWAD-NU   & {66.6} $\pm$ 0.2 & {23.2} $\pm$ 0.2 & {54.3} $\pm$ 0.2 & {16.2} $\pm$ 0.2 & {66.1} $\pm$ 0.6 & {56.2} $\pm$ 0.2 & \textbf{47.1} \\ 
\bottomrule
\end{tabular}}
\end{center}
\vspace{-0.08in}
\caption{
\footnotesize  DomainNet
}
\end{subtable}
\caption{
\footnotesize Nuclear norm regularization improves the domain generalization performance over various baselines such as ERM, Mixup, and SWAD.
}
\label{tab:split}
\vspace{-0.25in}
\end{wraptable}

\paragraph{Nuclear norm regularization achieves strong performance across a wide range of datasets.}
We present an overview of the OOD accuracy for DomainBed datasets across various algorithms in Table~\ref{tab:domain_bed}. We observe that: (1) incorporating nuclear norm regularization consistently improves the performance of ERM and SWAD across all datasets considered. In particular, compared to ERM, ERM-NU yields an average accuracy improvement of 1.7\%. (2) SWAD-NU demonstrates highly competitive performance relative to other baselines, including prior invariance-learning approaches such as IRM, VREx, and DANN. Notably, the approach does not require domain labels, which further underscores the versatility of nuclear norm regularization for real-world datasets.

\paragraph{Nuclear norm regularization significantly improves baselines.} 
Across a range of realistic datasets, nuclear norm regularization enhances competitive baselines. To examine whether NU is effective with baselines other than SWAD, in Figure~\ref{fig:avg_improvement}, we plot the average difference in accuracy with and without nuclear norm regularization for ERM, Mixup, and SWAD. Detailed results for individual datasets are in Table~\ref{tab:split}. Encouragingly, adding nuclear norm regularization improves the performance over all three baselines across the five datasets. In particular, the average accuracy is improved by 3.5 with ERM-NU over ERM on Terra Incognita, and 3.1 with Mixup-NU over Mixup on DomainNet. This further suggests the effectiveness of nuclear norm regularization in learning invariant features. See full results in Appendix~\ref{app:exp}.





\subsection{Ablations and discussions} \label{sec:ablation}

\paragraph{Analyzing the regularization strength with stable rank.} We aim to better understand the strength of nuclear norm regularization (used in fine-tuning only) on OOD accuracy. Due to the precision of floating point numbers and numerical perturbation, it is common to use stable rank (numerical rank) to approximate the matrix rank in numerical analysis. 
Suppose a matrix $ \mathbf{M} \in \mathbb{R}^{m \times n}$, the stable rank is defined as:
$
    \text{StableRank}(\mathbf{M}):= {\|\mathbf{M}\|^2_F \over \|\mathbf{M}\|^2_2},
$
where $\|\|_2$ is the operator
norm (spectral norm) and $\|\|_F$ is the Frobenius norm. 
The stable rank is analogous to the classical rank of a matrix but considerably more
well-behaved. For example, the stable rank is a continuous and Lipschitz function while the rank function is discrete. 
In Figure~\ref{fig:rank_full}, we calculate the stable rank of the OOD data feature representation of the ERM-NU model trained with different nuclear norm regularization weight $\lambda$ and we plot the OOD accuracy simultaneously. We have three observations. (1) The stable rank is

\begin{wrapfigure}{hr}{0.5\textwidth}
    \centering
    \newcommand{\imagewidth}{0.49\linewidth}
    \begin{subfigure}[b]{\imagewidth}        \centering\includegraphics[width=\linewidth]{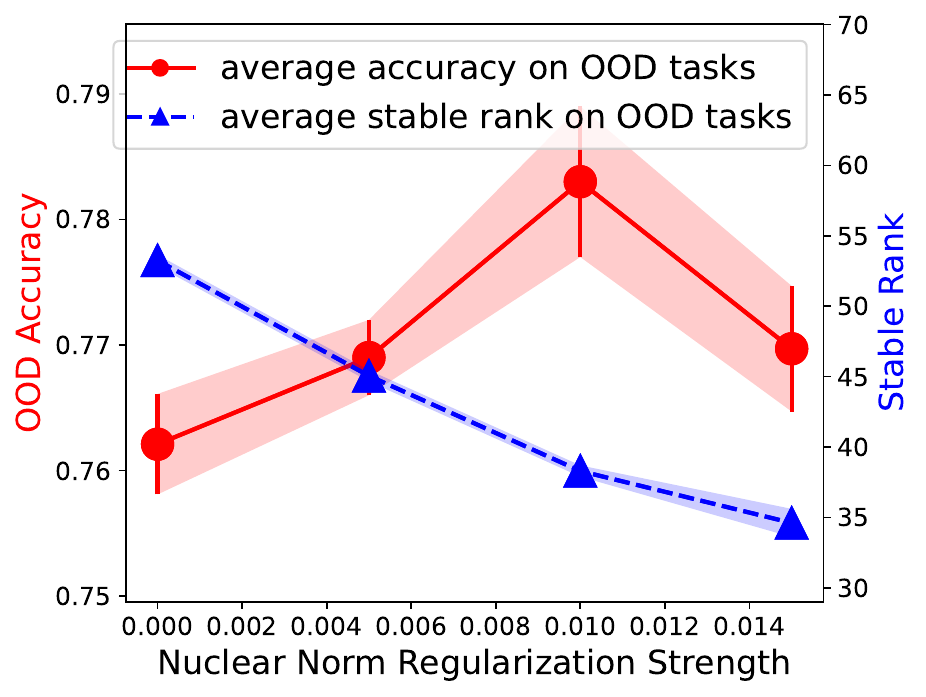}
    \caption{VLCS}
    \end{subfigure}
    \begin{subfigure}[b]{\imagewidth}        \centering\includegraphics[width=\linewidth]{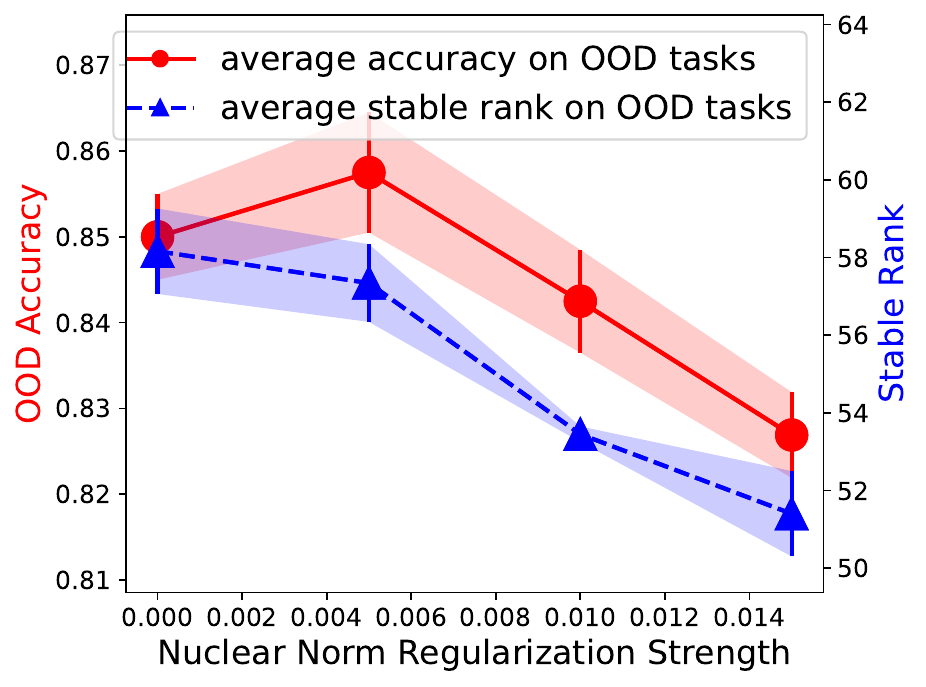}
    \caption{PACS}
    \end{subfigure}
    \begin{subfigure}[b]{\imagewidth}        \centering\includegraphics[width=\linewidth]{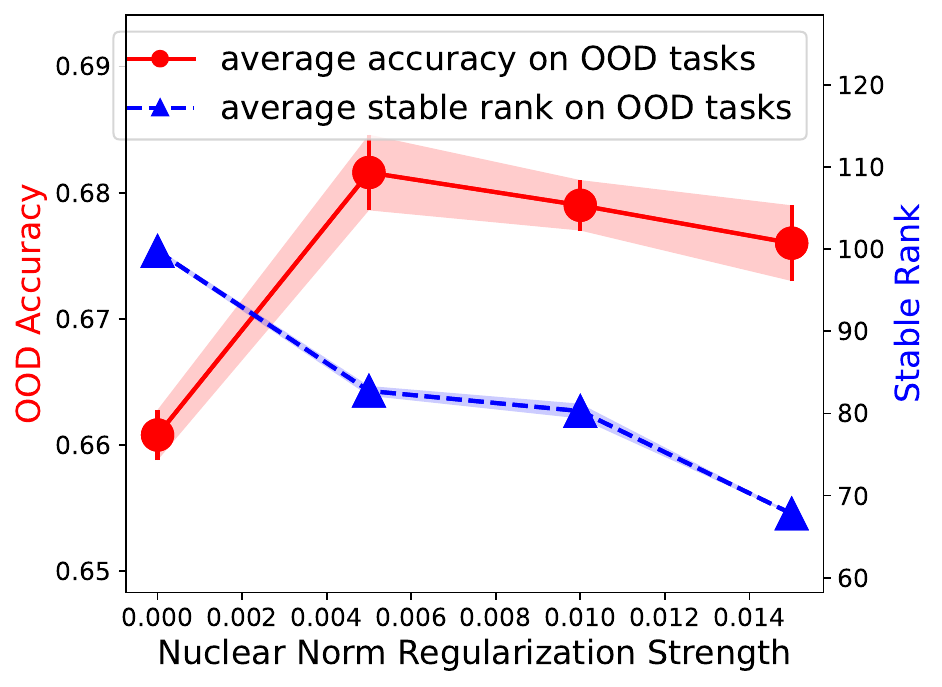}
    \caption{OfficeHome}
    \end{subfigure}
    \begin{subfigure}[b]{\imagewidth}        \centering\includegraphics[width=\linewidth]{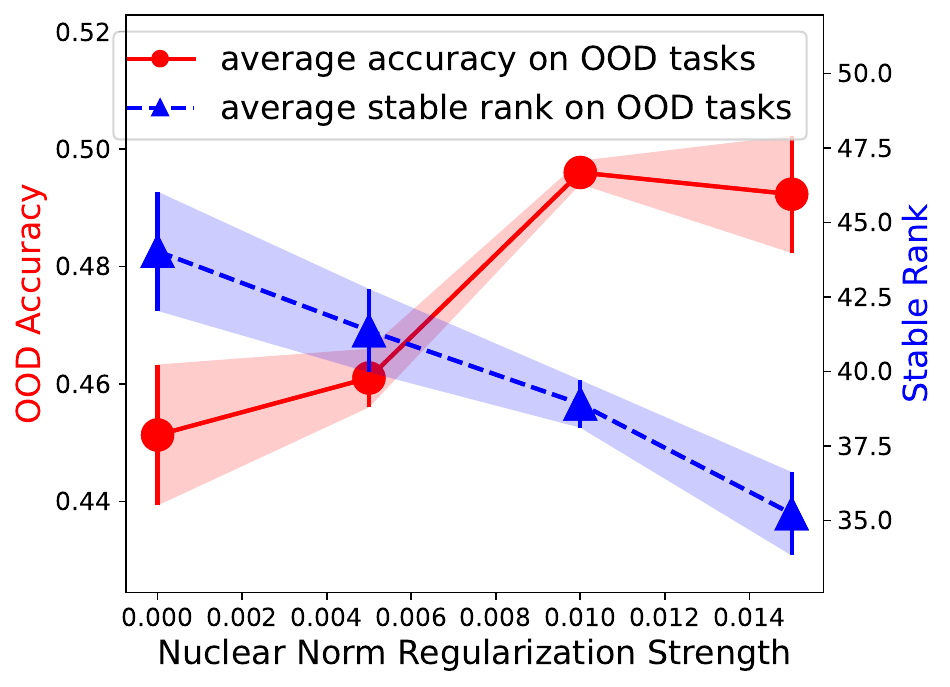}
    \caption{TerraIncognita}
    \end{subfigure}
    \begin{subfigure}[b]{\imagewidth}        \centering\includegraphics[width=\linewidth]{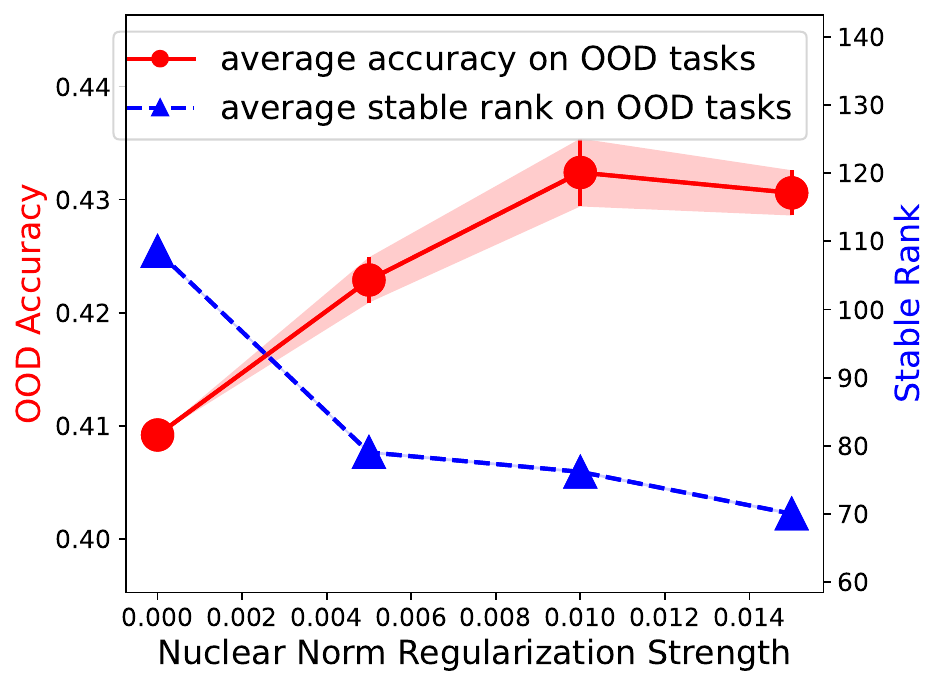}
    \caption{DomainNet}
    \end{subfigure}
    \begin{subfigure}[b]{\imagewidth}        \centering\includegraphics[width=\linewidth]{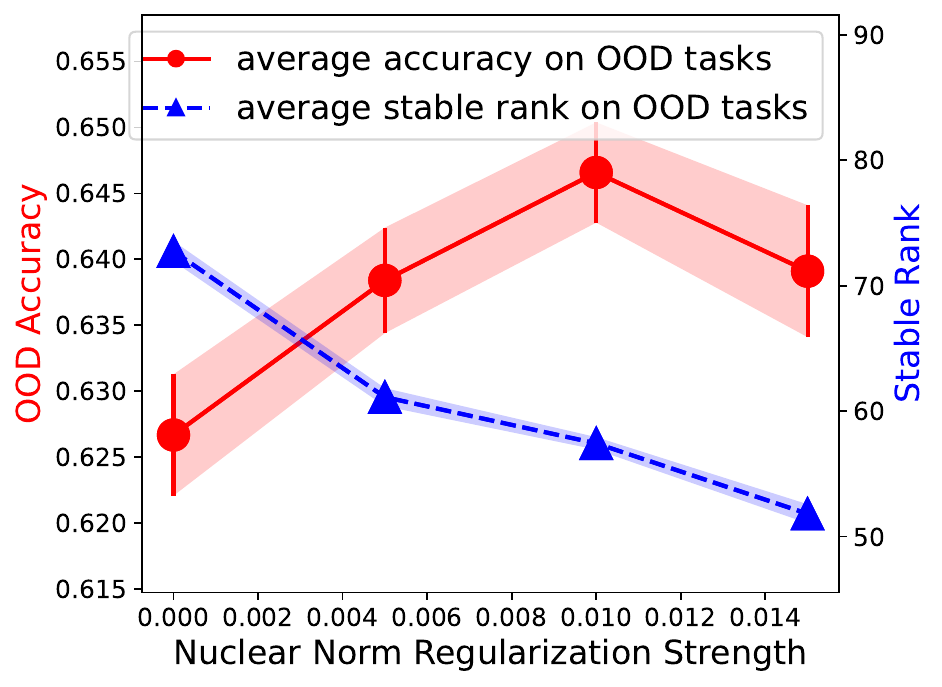}
    \caption{Average}
    \end{subfigure}
    \caption{\footnotesize Stable rank and OOD accuracy of ERM-NU with varying nuclear norm regularization weight $\lambda$ ($x$-axis) on different datasets. 
    }
    \label{fig:rank_full}
    \vspace{-0.2in}
\end{wrapfigure}

decreasing when we have a stronger nuclear norm regularizer, which is consistent with our method intuition. (2) As the nuclear norm regularization weight increases, the OOD accuracy will increase first and then decrease. In the first stage, as we increase nuclear norm regularization weight, the environmental features start to be ruled out and the OOD accuracy improves. In the second stage, when nuclear norm regularization strength is large enough,  some invariant features will be ruled out, which will hurt the generalization. (3) Although the ResNet-50 is a 2048-dim feature extractor, the stable rank of the OOD data feature representation is pretty low, e.g, on average the stable rank is smaller than 100. On the other hand, when the dataset becomes more ``complicated'', the stable rank will increase, e.g., when $\lambda = 0.0$, the stable rank of DomainNet features (6 domains, 345 classes) is over 100, while the stable rank of VLCS (4 domains, 5 classes) features is only around 50.

\paragraph{Exploring alternative regularizers. } We use SWAD~\cite{cha2021swad} as a baseline, which aims to find flat minima that suffers less from overfitting by a dense and overfit-aware stochastic weight sampling strategy. We consider different regularizers with SWAD: CORAL~\cite{sun2016deep} tries to minimize
domain shift by aligning the second-order statistics of input data from training and test domains. MIRO~\cite{cha2022domain}, one of the SOTA regularization methods in domain generalization, uses mutual information to reduce the distance between the pre-training model and the fine-tuned model. The performance comparison is shown in Table~\ref{tab:swad-main}, where we observe that nuclear norm regularization consistently achieves competitive performance compared to alternative regularizers.



\begin{wraptable}{r}{0.5\textwidth}
\vspace{-0.2in}
\begin{center}
\adjustbox{max width=\linewidth}{%
\begin{tabular}{lccccc}
\toprule
 {Algorithm}  &  {VLCS}  &  {PACS}   &  {DomainNet}  & Average\\
\midrule
SWAD~\cite{cha2021swad}  & {79.1} $\pm$ 0.1  & 88.1 $\pm$ 0.1   &  {46.5} $\pm$ 0.1  & 71.2\\
SWAD-CORAL~\cite{sun2016deep} & {78.9} $\pm$ 0.1  & 88.3 $\pm$ 0.1  &  {46.8} $\pm$ 0.0   & 71.3 \\
SWAD-MIRO~\cite{cha2022domain}  & {79.6} $\pm$ 0.2  & {88.4} $\pm$ 0.1   &  {47.0} $\pm$ 0.0 & {71.7}  \\
SWAD-NU (ours)  & \textbf{79.8} $\pm$ 0.2  & \textbf{88.5} $\pm$ 0.2 &  \textbf{47.1} $\pm$ 0.1  & \textbf{71.8} \\
\bottomrule
\end{tabular}}

\end{center}

    \caption{\footnotesize Alternative regularizers with SWAD on the DomainBed benchmark. Full Table is in Appendix~\ref{app:exp}.
    }
    \label{tab:swad-main}
    \vspace{-0.15in}
\end{wraptable}

\section{Theoretical Analysis}\label{sec:theory}
Next, we present a simple but insightful theoretical result showing that, for a more general setting defined in Section~\ref{sec:synthetic}, the ERM-rank solution to the Equation~\eqref{eq:rank_loss} is much more robust than the ERM solution on OOD tasks.

\paragraph{Data distributions.} 
We consider the binary classification setting for simplification~\cite{arjovsky2019invariant,rosenfeld2021risks}. Let $\cX$ be the input space,  and $\cY = \{\pm 1\}$ be the label space. 
Let $\Tilde{\zb}: \cX \rightarrow \mathbb{R}^d$ be a feature pattern encoder of the input data $\xb$, i.e., $\Tilde{\zb}(\xb) \in \mathbb{R}^d$. For any $j\in [d]$, we see $\Tilde{\zb}_j$ is a specific feature pattern encoder, i.e, $\Tilde{\zb}_j(\xb)$ being the $j$-th dimension of $\Tilde{\zb}(\xb)$. Suppose $\xb$ are drawn from some distribution condition on the label $y$, then we have $\Tilde{\zb}(\xb)$ drawn from some distribution condition on the label $y$. 
For simplicity, we denote $\zb = \Tilde\zb(\xb) y$. We assume, for any $j, j' \in [d]$, $\zb_j$ and $ \zb_{j'}$ are independent condition on $y$ when $j \neq j'$.
Let $R \subseteq [d]$ be a subset of size $r$ corresponding to the invariant features, and let $U = [d]\setminus R$ be a subset of size $d-r$ corresponding to the spurious features. With slight abuse of notation, if $\Tilde\zb = g^{-1}$, $y\zb_R$ and $y\zb_U$ correspond to $\zb_c$ and $\zb_e$ in Figure \ref{fig:teaser_l} respectively. 
%

Next, we define ID tasks and OOD tasks.
We inject all the randomness into $\zb$. For invariant features in both ID and OOD tasks, we assume that for any $j \in R$,  $\zb_j \sim [0,1]$ uniformly, so $\EE[\zb_j] = {1 \over 2}$. 
Then, we define $\cD_{\gamma}$. 
A random variable $z \sim \cD_{\gamma}$ means, $z \sim [0,1]$ uniformly with probability ${1\over 2} + \gamma$ and $z \sim [-1,0]$ uniformly with probability ${1\over 2} - \gamma$, so $\EE[z] = \gamma$.
In ID tasks, for any environmental features $j \in U$, we assume that $\zb_j \sim \cD_{\gamma}$, where $\gamma \in \left({3 \over \sqrt{r}}, {1 \over 2}\right)$. In OOD tasks, for any $j \in U$, we assume that $\zb_j \sim \cD_{-\gamma}$.  
We denote the corresponding distributions as $\cD_{\text{id}}$ and $\cD_{\text{ood}}$ respectively. 
We note that the difference between $\cD_{\text{id}}$ and $\cD_{\text{ood}}$ is that the environmental features have different spurious correlations with label $y$, i.e., different $e$ in Figure \ref{fig:teaser_l}.

\paragraph{Explanation and intuition for our data distributions.} There is an upper bound for $\gamma$ because the environmental features have a smaller correlation with the label than the invariant features, e.g., when $\gamma={1\over 2}$ we cannot distinguish invariant features and environmental features. We also have a lower bound for $\gamma$ to distinguish environmental features and noise. When $\gamma=0$, the ID task and the OOD task will be identical (no distribution shift). We can somehow use $\gamma$ to measure the ``distance" between the ID task and the OOD task. The intuition about the definition of $\cD_{\text{id}}$ and $\cD_{\text{ood}}$ is that the environmental features may have different spurious correlations with labels in different tasks, while the invariant features keep the same correlations with labels through different tasks. 



\paragraph{Objectives.}
For any $j\in [d]$, we assume the
feature embedder (defined in Section~\ref{sec:method}, see Figure \ref{fig:teaser_l}) $\Phi(\xb)_j = \wb_j\Tilde{\zb}_j(\xb)$ where $\Tilde{\zb}_j$ is a specific feature pattern encoder and $\wb_j$ is a scalar, i.e, the strength of the corresponding feature pattern encoder.
We simplify the fine-tuning process, setting $\ab = [1,1,\dots,1]^\top $ and the trainable parameter to be $\wb$ (varying the impact of each feature in fine-tuning). 
Thus, the network output is $f_{\wb}(\xb) = \sum_{j=1}^d \wb_j \Tilde{\zb}_j(\xb)$. 
We consider two objective functions. The first is traditional ERM with weight decay ($\ell_2$ norm regularization). The ERM-$\ell_2$  objective function is
\begin{align}\label{eq:l2_loss}
  \min_{\wb} \cL^\lambda(\wb):=\cL(\wb)+{\lambda \over 2}\|\wb\|_2^2,
\end{align}
where $\cL_{(\xb, y)}(\wb) = \ell(y f_{\wb}(\xb))$ is the loss on an example $(\xb,y)$ and $\ell(z)$ is 
the logistic loss $\ell(z) = {\ln(1+\exp(-z))}$. 
The second objective we consider is ERM with bounded rank. 
Note that for a batch input data $\Xb$ with batch size $> d$, it is full rank with probability 1.   
Thus, we say the total feature rank is $\|\wb\|_0 \le d$ ($\|\wb\|_0$ indicates the number of nonzero elements in $\wb$). Thus, equivalent to Equation~(\ref{eq:rank_loss}), with a upper bound $B_{\text{rank}}$, the ERM-rank  objective function is 
\begin{align}\label{eq:rank_loss_1}
  \min_{\wb} \cL( \wb) \quad \text{subject to} \quad \|\wb\|_0 \le B_{\text{rank}}.
\end{align}
Note that our method, i.e., Equation~(\ref{eq:rank_loss_nuclear}), is a convex relaxation to the ERM-rank objective function.


\paragraph{Theoretical results.}
First, we analyze the property of the optimal solution of ERM-$\ell_2$ on the ID task. Following the idea from Lemma B.1 of ~\cite{shi2022the},  we have the Lemma below.  

\begin{lemma}\label{lemma:erm-optimal-main}
Consider the ID setting with the ERM-$\ell_2$ objective function. Then any optimal $\wb^*$ of ERM-$\ell_2$  objective function follows conditions (1) for any $j\in R$, $\wb^*_j =: \alpha$; (2) for any $j\in U$, $\wb^*_j := \beta$; (3) $0<\beta < \alpha < {1\over \sqrt{r}} $,  ${\alpha \over \beta} < {3\over4 \gamma}$.
\end{lemma}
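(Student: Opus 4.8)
The plan is to exploit strict convexity together with the permutation symmetry of the problem to collapse it to two scalar unknowns, and then read off the signs, the ordering, and the magnitude bounds from the stationarity conditions.

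\textbf{Reduction.} Writing $\sigma(t)=1/(1+e^{-t})$ so that $\ell'(z)=-\sigma(-z)$, note that the margin satisfies $y f_{\wb}(\xb)=\sum_j \wb_j \zb_j=:\wb^\top\zb$, hence $\cL(\wb)=\EE[\ell(\wb^\top\zb)]$ with $\zb$ having independent coordinates. Thus $\cL^\lambda$ is convex, and the $\tfrac{\lambda}{2}\|\wb\|_2^2$ term makes it strictly convex, so the minimizer $\wb^*$ is unique. Since the law of $\zb$ is invariant under permuting coordinates within $R$ and within $U$, and $\cL^\lambda$ inherits this symmetry, any permutation of $\wb^*$ within $R$ (resp.\ $U$) is again a minimizer; uniqueness forces $\wb^*$ to be constant on each block. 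This gives (1)--(2); write the common values $\alpha,\beta$ and set $A:=\sum_{j\in R}\zb_j\in[0,r]$, $B:=\sum_{k\in U}\zb_k$, $M:=\alpha A+\beta B$. Stationarity $\nabla\cL^\lambda(\wb^*)=0$ then reads, for $j_0\in R$, $k_0\in U$,
\begin{align}\label{eq:foc}
 \lambda\alpha=\EE[\sigma(-M)\,\zb_{j_0}],\qquad \lambda\beta=\EE[\sigma(-M)\,\zb_{k_0}].
\end{align}

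\textbf{Signs and ordering.} Since $\sigma>0$ and $\zb_{j_0}\ge 0$ a.s.\ for $j_0\in R$, the first identity in \eqref{eq:foc} immediately gives $\alpha>0$. For $\beta$, I would use convexity of the reduced objective in $\beta$: at $\beta=0$ the two blocks decouple ($A\perp B$), so $\partial_\beta\cL^\lambda(\alpha,0)=\EE[\ell'(\alpha A)]\,\EE[B]=(d-r)\gamma\,\EE[\ell'(\alpha A)]<0$ because $\ell'<0$ and $\gamma>0$; applying this at $\alpha=\alpha^*$, convexity of $\beta\mapsto\cL^\lambda(\alpha^*,\beta)$ forces its minimizer $\beta^*>0$. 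For $\beta<\alpha$, I would compare the two identities in \eqref{eq:foc}, which share the \emph{same} reweighting $\sigma(-M)$: decomposing $M=\wb_j\zb_j+M_{-j}$ with $M_{-j}\perp\zb_j$ and using that $\zb_{j_0}$ has mean $\tfrac12$ while $\zb_{k_0}$ has mean $\gamma<\tfrac12$, together with monotonicity of $\sigma$, shows the invariant correlation $\EE[\sigma(-M)\zb_{j_0}]$ strictly exceeds the environmental one $\EE[\sigma(-M)\zb_{k_0}]$, whence $\alpha>\beta$.

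\textbf{Magnitude/ratio bounds and the main obstacle.} The quantitative claims $\alpha<1/\sqrt r$ and $\alpha/\beta<3/(4\gamma)$ I would obtain by bounding the correlations in \eqref{eq:foc} from the appropriate sides. Using $\zb_{j_0}\ge 0$ and $\sigma(-\alpha\zb_{j_0}-M_{-j_0})\le\sigma(-M_{-j_0})$ yields the upper bound $\lambda\alpha\le\tfrac12\EE[\sigma(-M_{-j_0})]$, and the absolute bound $\alpha<1/\sqrt r$ follows by showing $\sigma(-M)$ is typically small, since on $\cD_{\text{id}}$ the margin concentrates near $\alpha\tfrac r2+\beta(d-r)\gamma>0$, with $\gamma>3/\sqrt r$ supplying the slack. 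For the ratio, I would split the $\cD_\gamma$-expectation about its symmetry point, writing the environmental integrand (with $s=M_{-k_0}$, $z\in[0,1]$) as $\tfrac12\bigl(\sigma(-\beta z-s)-\sigma(\beta z-s)\bigr)+\gamma\bigl(\sigma(-\beta z-s)+\sigma(\beta z-s)\bigr)$, controlling the first piece by $\sigma'\le\tfrac14$ and bounding the second below by a $\gamma$-proportional term, to get $\lambda\beta\gtrsim\tfrac{2\gamma}{3}\EE[\sigma(-M_{-k_0})]$; dividing the two bounds gives $\alpha/\beta<3/(4\gamma)$. The delicate step---and the main obstacle---is exactly this last estimate: because $\sigma(-M)$ couples $A$ and $B$, the upper bound on $\alpha$ and the lower bound on $\beta$ must be measured against comparable quantities ($\EE[\sigma(-M_{-j_0})]$ versus $\EE[\sigma(-M_{-k_0})]$), which requires controlling the single-coordinate change of measure and using $\gamma>3/\sqrt r$ to absorb the lower-order terms. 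This mirrors the computation in Lemma~B.1 of~\cite{shi2022the}, whose argument I would adapt.
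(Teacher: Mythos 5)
Your overall architecture matches the paper's: permutation symmetry plus convexity to collapse the problem to $(\alpha,\beta)$, then the stationarity conditions $\lambda\alpha=\EE[\sigma(-M)\zb_{j_0}]$, $\lambda\beta=\EE[\sigma(-M)\zb_{k_0}]$ to read off signs, ordering, and the quantitative bounds. Two of your steps are in fact cleaner than the paper's: deducing block-constancy from strict convexity and uniqueness (the paper instead averages swapped coordinates and applies Jensen), and deducing $\beta>0$ from $\partial_\beta\cL^\lambda(\alpha^*,0)=(d-r)\gamma\,\EE[\ell'(\alpha^* A)]<0$ plus convexity --- the paper's stochastic-dominance contradiction as written only yields $\beta\ge 0$, while the lemma asserts strict positivity.

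The genuine gap is in the step $\alpha<1/\sqrt r$. Your bound $\lambda\alpha\le\tfrac12\EE[\sigma(-M_{-j_0})]$ together with ``$\sigma(-M)$ is typically small'' cannot by itself control $\alpha$: the conclusion is simply false for small $\lambda$ (as $\lambda\to 0$ the logistic minimizer blows up on this data), so any correct argument must invoke the standing assumptions $\lambda>\Omega\left(\sqrt r/\exp\left(\sqrt r/5\right)\right)$ and $d>r/\gamma^2+r$, neither of which appears in your sketch; you instead attribute the slack to $\gamma>3/\sqrt r$, which only controls one of the concentration error terms. The paper's proof is a bootstrap by contradiction: assume $\alpha\ge 1/\sqrt r$, drop the $\beta$-block using $\beta\ge 0$, apply Hoeffding to $\sum_{j\in R}\zb_j$ and $\sum_{j\in U}\zb_j$ to show the right-hand side is $O(\exp(-\sqrt r/5))$, and contradict $\lambda\alpha\ge\lambda/\sqrt r$. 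Separately, for the ratio bound, the ``main obstacle'' you flag --- comparing $\EE[\sigma(-M_{-j_0})]$ against $\EE[\sigma(-M_{-k_0})]$ --- is exactly what the paper's proof avoids: it bounds both first-order conditions against the \emph{same} leave-two-out margin $Z=\alpha\sum_{j\in R,j\ne k}\zb_j+\beta\sum_{j\in U,j\ne l}\zb_j$, shifts by $2\alpha$ on the event $\zb_l\ge 0$, and converts the shift into a multiplicative factor via $\ell'(z+c)\le e^{-c}\ell'(z)$, after which $\alpha<1/\sqrt r$ and $\gamma>3/\sqrt r$ give $\alpha/\beta\le(2\gamma-2/\sqrt r)^{-1}<3/(4\gamma)$. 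Your two-sided decomposition of the $\cD_\gamma$ integrand is plausible, but since you defer precisely the change-of-measure estimate it hinges on, that step remains a plan rather than a proof.
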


In Lemma~\ref{lemma:erm-optimal-main}, we show that the ERM-$\ell_2$ objective will encode all features correlated with labels, even when the correlation between spurious features and labels is weak (e.g. $ \gamma = O\left(1 / \sqrt{r}\right)$). 
However, the optimal solution of the ERM-rank objective will only encode the features that have a strong correlation with labels (invariant features), shown in Lemma~\ref{lemma:rank_main}.

\begin{lemma}\label{lemma:rank_main}
Assume $1 \le B_{\textup{rank}} \le r$. Consider the ID setting with the ERM-rank objective function. For any optimal $\wb^*$ of ERM-rank objective function, let $R_{\textup{rank}} = \{ j\in [d]: \wb^*_j \neq 0 \}$. Then, we have $R_{\textup{rank}}$ satisfying the following property (1) $|R_{\textup{rank}}| = B_{\textup{rank}}$ and (2) $R_{\textup{rank}} \subseteq R$. 
\end{lemma}

Based on the property of two optimal solutions, we can show the performance gap between these two optimal solutions on the OOD task, considering the spurious features may change their correlation to the labels in different tasks. 

\begin{proposition}\label{prop:gap}
Assume $1 \le B_{\textup{rank}} \le r, \lambda > \Omega\left({\sqrt{r} \over \exp\left({\sqrt{r} \over 5}\right)}\right), d > {r \over \gamma^2} + r, r>C$,  where $C$ is some constant $< 20$.
The optimal solution for the ERM-rank objective function on the ID tasks has 100\% OOD test accuracy, while the optimal solution for the ERM-$\ell_2$ objective function on the ID tasks has OOD test accuracy at most $\exp\left(-{r \over 10} \right)\times 100\%$ (much worse than random guessing). 
\end{proposition}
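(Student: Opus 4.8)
The plan is to reduce both claims to a statement about the sign of the OOD prediction margin. For label $y$ and weights $\wb$, the quantity that determines correctness is $S := y f_{\wb}(\xb) = \sum_{j=1}^d \wb_j \zb_j$, using $\zb = \Tilde{\zb}(\xb)y$. The classifier is correct exactly when $S>0$, so the OOD accuracy is precisely $\Pr_{\cD_{\text{ood}}}[S>0]$, and the whole proof comes down to controlling this probability under the two weight profiles supplied by Lemma~\ref{lemma:rank_main} and Lemma~\ref{lemma:erm-optimal-main}.

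For the ERM-rank solution I would invoke Lemma~\ref{lemma:rank_main}: the optimizer places strictly positive weight only on a subset $R_{\textup{rank}}\subseteq R$ of invariant coordinates and zero weight everywhere else. Since invariant features have the same law under $\cD_{\text{id}}$ and $\cD_{\text{ood}}$ — each $\zb_j$ with $j\in R$ is uniform on $[0,1]$, hence nonnegative almost surely — the margin $S=\sum_{j\in R_{\textup{rank}}}\wb^*_j\zb_j$ is a sum of nonnegative terms with strictly positive coefficients, so $S>0$ almost surely. This gives $100\%$ OOD accuracy and is the easy half.

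For the ERM-$\ell_2$ solution I would first use Lemma~\ref{lemma:erm-optimal-main} to substitute the profile $\wb^*_j=\alpha$ on $R$ and $\wb^*_j=\beta$ on $U$, with $0<\beta<\alpha<1/\sqrt r$ and $\alpha/\beta<3/(4\gamma)$; the hypothesis $\lambda=\Omega(\sqrt r/\exp(\sqrt r/5))$ is what certifies these structural bounds. Under $\cD_{\text{ood}}$ the invariant coordinates have mean $1/2$ and the $d-r$ environmental coordinates have mean $-\gamma$, so the expected margin is $\EE[S]=\tfrac{\alpha r}{2}-\beta\gamma(d-r)$. Combining $\alpha<\tfrac{3\beta}{4\gamma}$ with $d-r>r/\gamma^2$ yields $\EE[S]<-\tfrac{5}{8}\beta\gamma(d-r)<0$, so the mean margin is negative and grows linearly in $d-r$. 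I would then apply Hoeffding's inequality to the independent terms (ranges $\alpha$ on $R$ and $2\beta$ on $U$), giving $\Pr[S>0]\le\exp\!\big(-2(\EE[S])^2/(r\alpha^2+4(d-r)\beta^2)\big)$.

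The main obstacle — and the one delicate point — is that both the squared mean and the variance proxy scale like $\beta^2$ and grow with $d-r$, so a naive bound degrades as $d\to\infty$ and leaves the undetermined scale $\beta$, for which the lemma provides no absolute lower bound. The key observation is that $\beta$ cancels: bounding $r\alpha^2<\tfrac{9r\beta^2}{16\gamma^2}$ via $\alpha<\tfrac{3\beta}{4\gamma}$ makes the exponent ratio $\beta$-free, and then $d-r>r/\gamma^2$ collapses it to a clean multiple of $r$, namely $2(\EE[S])^2/(r\alpha^2+4(d-r)\beta^2)>\tfrac{5r}{32}>r/10$. This yields $\Pr[S>0]\le\exp(-r/10)$, i.e. OOD accuracy below $\exp(-r/10)\times100\%$, which is far below random guessing. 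I expect the condition $r>C$ and the precise $\lambda$ threshold to be consumed only in this constant bookkeeping and in making Lemma~\ref{lemma:erm-optimal-main} applicable.
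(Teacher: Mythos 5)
Your proposal is correct and follows essentially the same route as the paper's proof: the ERM-rank half is immediate because the solution loads only on nonnegative invariant coordinates, and the ERM-$\ell_2$ half combines $\alpha/\beta < 3/(4\gamma)$ with $d-r > r/\gamma^2$ to make the OOD margin's mean strongly negative and then applies Hoeffding. The only (immaterial) difference is bookkeeping: the paper first divides the margin by $\beta$ and replaces $\alpha/\beta$ by $3/(4\gamma)$ before applying Hoeffding, whereas you keep $\alpha,\beta$ explicit and observe that $\beta$ cancels in the exponent — in fact your accounting of the squared ranges $r\alpha^2+4(d-r)\beta^2$ is slightly more careful than the paper's.
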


\paragraph{Discussions.} 
The assumption of $\lambda$ and $d$ means that the regularization strength cannot be too small and the environmental features signal level should be compatible with invariant features signal level. Then,  Proposition~\ref{prop:gap} shows that even with infinite data, the optimal solution for ERM-$\ell_2$ on the ID tasks cannot produce better performance than random guessing on the OOD task. However, the optimal solution for ERM-rank on the ID tasks can still produce 100\% test accuracy. The proof idea is that, by using the gradient equal to zero and the properties of the logistic loss, the ERM-$\ell_2$ objective will encode all features correlated with labels, even when the correlation between spurious features and label is weak (e.g. $ \gamma = O\left(1 / \sqrt{r}\right)$). Moreover, there is a positive correlation between the feature encoding strength and the corresponding feature-label correlation (Lemma~\ref{lemma:erm-optimal-main} (3)).  Then, we can show that the value of $\beta$ is compatible with the value of $\alpha$ in Lemma~\ref{lemma:erm-optimal-main}. Thus, when the OOD tasks have a different spurious feature distribution, the optimal solution of ERM-$\ell_2$ objective may thoroughly fail, i.e., much worse than random guessing. However, the optimal solution of the ERM-rank objective will only encode the features that have a strong correlation with labels (invariant features). Thus, it can guarantee 100\% test accuracy on OOD tasks. See the full proof in Appendix~\ref{app:theory}.



\section{Related Works}
\mypara{Nuclear norm minimization.} Nuclear norm is commonly used
to approximate the matrix rank~\cite{recht2010guaranteed}. Nuclear norm minimization has been widely used in many areas where the solution is expected to have a low-rank structure. It has been widely applied for low-rank matrix approximation, completion, and recovery~\cite{candes2006quantitative,candes2011robust,chandrasekaran2011rank,candes2012exact} with applications such as graph clustering~\cite{korlakai2014graph}, community detection~\cite{li2021convex}, compressed sensing~\cite{dong2014compressive}, recommendations system~\cite{koren2009matrix} and robust 
Principal Component Analysis
~\cite{lu2019tensor}. Nuclear norm regularization can also be used in multi-task learning to learn shared representations across multiple tasks, which can lead to improved generalization and reduce overfitting~\cite{kumar2012learning}. Nuclear norm has been used in computer vision as well to solve problems such as image denoising~\cite{gu2014weighted} and image restoration~\cite{yair2018multi}. In this work, we focus on utilizing nuclear norm-based regularization for domain generalization. We provide extensive experiments on synthetic and realistic datasets and theoretical analysis to better understand their effectiveness.

\mypara{Contextual bias, domain generalization, and group robustness.} There has been rich literature studying the classification performance in the presence of pre-defined contextual bias and spurious correlations~\cite{torralba2003contextual, beery2018recognition, barbu2019objectnet, bahng2020learning,ming2022impact}. The reliance on contextual bias such as image backgrounds, texture, and color for object detection has also been explored~\cite{ijcai2017zhu,dcngos2018,geirhos2018imagenettrained,zech2018variable,sagawa2019distributionally,xiao2021noise}. In contrast, our study requires no prior information on the type of contextual bias and is broadly applicable to different categories of bias.
The task of domain generalization aims to improve the classification performance of models on new test domains. A plethora of algorithms have been proposed in recent years: learning domain invariant~\cite{ganin2016domain, sun2016deep,li2018deep, li2018domain,meng2022attention} and domain-specific features~\cite{bui2021exploiting}, minimizing the weighted combination of risks from training domains~\cite{sagawa2019distributionally}, mixing risk penalty terms to facilitate invariance prediction~\cite{arjovsky2019invariant, krueger2020out}, prototype-based contrastive learning~\cite{yao2022pcl}, meta-learning~\cite{dou2019domain}, and data-centric approaches such as generation~\cite{zhou2020learning} and mixup~\cite{zhang2018mixup, wang2020heterogeneous, luo2020generalizing,yao2022improving}. Recent works also demonstrate promising results with pre-trained models~\cite{dubois2021optimal,cha2022domain,li2022sparse,zhang2022towards,arpit2022ensemble,kirichenko2023last,xu2023improving}. Beyond domain generalization, another closely related task is to improve the group robustness in the presence of spurious correlations~\cite{sagawa2019distributionally,liu2021just, zhang2022cnc}. However, recent works often assume access to group labels for a small dataset or require multiple stages of training. In contrast, our approach is simple and efficient, requiring no access to domain labels or multi-stage training, and can improve over ERM-like algorithms on a broad range of real-world datasets.

\section{Conclusions}
In this work, we propose nuclear norm minimization, a simple yet effective regularization method for improving domain generalization without acquiring domain annotations. Key to our method is minimizing the nuclear norm of feature embeddings as a convex proxy for rank minimization. Empirically, our method is broadly applicable to many competitive algorithms for domain generalization and achieves competitive performance across synthetic and a wide range of real-world datasets. Theoretically, our method outperforms ERM with $\ell_2$ regularization in the linear setting. 




\bibliography{ref}

\newpage
\appendix
\onecolumn

\begin{center}
	\textbf{\LARGE Appendix }
\end{center}

\section{Acknowledgements}
The work is partially supported by Air Force Grant FA9550-18-1-0166, the National Science Foundation (NSF) Grants 2008559-IIS, CCF-2046710, CCF-2106707.  We are grateful for the support of the Wisconsin Alumni Research Foundation (WARF).

\section{ Broader Impact}
Our work aims at improving the domain generalization performance of models. Our paper is purely theoretical and empirical in nature and thus we foresee no immediate negative ethical impact. We provide a simple yet effective method that can be applied to different models, which may have a positive impact on the machine learning community. We hope our work will inspire effective algorithm design and promote a better understanding of domain generalization. 

\section{Limitation}
Our theoretical analysis requires strong assumptions on data distribution, e.g., coordinate independence and uniform distribution, although it is more general than the toy data model defined in Section~\ref{sec:synthetic}. Our analysis cannot fully explain or apply to the model train on real-world datasets that contain non-linear data, e.g., DomainNet~\cite{peng2019moment}, but we are trying to provide some insights into why nuclear norm regularization can be more robust than the ERM solution by using a simple linear data model.  
On the other hand, studying the general necessary and sufficient condition of domain generalization is still an open challenging problem~\cite{fang2022out}. We believe it may be beyond the scope of this paper and we leave it as future work.

\section{Proof of Theoretical Analysis}\label{app:theory}

\subsection{Auxiliary lemmas}
We first present some Lemmas we will use later.
\begin{lemma} \label{lemma:logistic-loss}
For the logistic loss $\ell(z) = \ln (1 + \exp(-z))$, we have the following statements (1) $\ell(z)$ is strictly decreasing and convex function on $\RR$ and $\ell(z) > 0$; (2) $\ell'(z) = {-1 \over 1 + \exp(z)}$, $\ell'(z) \in (-1, 0) $;  (3) $\ell'(z)$ is strictly concave on $[0, +\infty)$, (4) for any $c>0$, $\ell'(z+c) \le \exp(-c)\ell'(z)$.
\end{lemma}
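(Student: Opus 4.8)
The plan is to verify each of the four claims by direct computation, since $\ell(z)=\ln(1+\exp(-z))$ is smooth on all of $\RR$. For claim (1), positivity is immediate: because $\exp(-z)>0$ we have $1+\exp(-z)>1$, so $\ell(z)>\ln 1 = 0$. Strict monotonicity and convexity I would deduce from the first two derivatives obtained while proving claim (2): strict decrease from $\ell'(z)<0$, and convexity from $\ell''(z)>0$.

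For claim (2), I would differentiate once by the chain rule to get $\ell'(z)=\frac{-\exp(-z)}{1+\exp(-z)}$, then multiply numerator and denominator by $\exp(z)$ to reach the stated closed form $\ell'(z)=\frac{-1}{1+\exp(z)}$. The range bound is then transparent: since $\exp(z)>0$ the denominator exceeds $1$, so $0<\frac{1}{1+\exp(z)}<1$ and hence $\ell'(z)\in(-1,0)$. This negativity closes the monotonicity part of claim (1), and differentiating once more yields $\ell''(z)=\frac{\exp(z)}{(1+\exp(z))^2}>0$, giving convexity.

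For claim (3), the plan is to show $\ell'$ is concave on $[0,\infty)$ by checking that its second derivative — the third derivative of $\ell$ — is nonpositive there. Writing $g(z):=\ell'(z)=-(1+\exp(z))^{-1}$, two differentiations give $g'(z)=\exp(z)(1+\exp(z))^{-2}$ and $g''(z)=\exp(z)(1+\exp(z))^{-3}\bigl(1-\exp(z)\bigr)$. For $z\ge 0$ we have $\exp(z)\ge 1$, so the factor $1-\exp(z)\le 0$ while the remaining factors are positive; hence $g''(z)\le 0$, which is concavity (strict for $z>0$, where $1-\exp(z)<0$).

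Finally, claim (4) reduces to an elementary algebraic inequality. Substituting the closed form, the target $\ell'(z+c)\le \exp(-c)\,\ell'(z)$ reads $\frac{-1}{1+\exp(z+c)}\le \frac{-\exp(-c)}{1+\exp(z)}$; multiplying by $-1$ (which reverses the inequality) and cross-multiplying by the positive denominators, it becomes $1+\exp(z)\ge \exp(-c)\bigl(1+\exp(z+c)\bigr)=\exp(-c)+\exp(z)$, i.e. $1\ge \exp(-c)$, which holds for every $c>0$. There is no genuine obstacle in any of these steps; the only point demanding care is tracking the inequality directions in (4) and the sign of the factor $1-\exp(z)$ in (3), both of which stem from $\ell'$ being negative.
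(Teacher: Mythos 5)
Your proof is correct and matches the paper's approach: the paper simply states that all four claims ``can be verified by direct calculation,'' and your computations carry out exactly those calculations. One small point worth noting for (3): since $g''(0)=0$, strict concavity on $[0,+\infty)$ follows because $g''<0$ on the open interval $(0,+\infty)$ makes $g'$ strictly decreasing on all of $[0,+\infty)$ --- a refinement you essentially have, just stated slightly loosely.
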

\begin{proof}[Proof of Lemma~\ref{lemma:logistic-loss}]
These can be verified by direct calculation.
\end{proof}

\begin{lemma} \label{lemma:gradient}
\begin{align}
    \frac{\partial \cL_{(\xb,y)}(\wb)}{\partial \wb_j} & =  \ell'(y  f_\wb(\xb)) {\zb}_j,
    \\
    \frac{\partial \cL(\wb)}{\partial \wb_j} & =  \EE_{(\xb,y)} \left[\ell'(y  f_\wb(\xb)){\zb}_j\right]\\
    \frac{\partial \cL^\lambda(\wb)}{\partial \wb_j} & =  \EE_{(\xb,y)} \left[\ell'(y  f_\wb(\xb)) {\zb}_j\right] + \lambda \wb_j
\end{align}
\end{lemma}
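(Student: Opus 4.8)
The plan is to verify each of the three identities by a direct chain-rule computation, working from the per-example loss up to the regularized population loss. First I would recall the relevant definitions from the \textbf{Objectives} paragraph: the network output is $f_{\wb}(\xb) = \sum_{k=1}^d \wb_k \Tilde{\zb}_k(\xb)$, the per-example loss is $\cL_{(\xb,y)}(\wb) = \ell(y f_{\wb}(\xb))$ with $\ell(z) = \ln(1+\exp(-z))$, and the sign-adjusted feature is $\zb = \Tilde{\zb}(\xb)\, y$, so that $\zb_j = y\,\Tilde{\zb}_j(\xb)$ componentwise.

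For the first identity I would apply the single-variable chain rule, differentiating $\ell$ at the scalar argument $y f_{\wb}(\xb)$ and then differentiating the inner argument with respect to $\wb_j$. Since $f_{\wb}$ is linear in $\wb$, we have $\partial_{\wb_j}\bigl(y f_{\wb}(\xb)\bigr) = y\,\Tilde{\zb}_j(\xb) = \zb_j$, and the only $\wb_j$-dependence enters through this term; hence
\begin{align*}
\frac{\partial \cL_{(\xb,y)}(\wb)}{\partial \wb_j}
= \ell'\!\bigl(y f_{\wb}(\xb)\bigr)\cdot y\,\Tilde{\zb}_j(\xb)
= \ell'\!\bigl(y f_{\wb}(\xb)\bigr)\,\zb_j,
\end{align*}
which is exactly the first line.

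The second identity follows by interchanging differentiation with the expectation that defines $\cL(\wb) = \EE_{(\xb,y)}[\cL_{(\xb,y)}(\wb)]$, then substituting the per-example gradient just derived. The third identity then adds the derivative of the weight-decay term: writing $\tfrac{\lambda}{2}\|\wb\|_2^2 = \tfrac{\lambda}{2}\sum_k \wb_k^2$, only the $k=j$ summand contributes, giving $\lambda \wb_j$, so $\partial_{\wb_j}\cL^\lambda(\wb) = \EE_{(\xb,y)}[\ell'(y f_{\wb}(\xb))\,\zb_j] + \lambda \wb_j$.

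The only step requiring genuine justification is the interchange of the derivative and the expectation in the second line; everything else is mechanical. I would discharge it by a standard dominated-convergence / differentiation-under-the-integral argument, which is straightforward here because the integrand's derivative is uniformly bounded: by Lemma~\ref{lemma:logistic-loss}(2) we have $\ell'(\cdot)\in(-1,0)$, and each feature $\Tilde{\zb}_j(\xb)$ is supported on a bounded interval (the entries of $\zb$ are drawn from uniform distributions on $[-1,1]$), so $|\ell'(y f_{\wb}(\xb))\,\zb_j|\le 1$ uniformly in a neighborhood of any fixed $\wb$. This uniform integrable bound licenses swapping the limit defining the partial derivative with the expectation, completing the proof.
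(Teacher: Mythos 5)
Your proof is correct and matches the paper's approach: the paper simply states that these identities ``can be verified by direct calculation,'' and your chain-rule computation, term-by-term differentiation of the $\ell_2$ penalty, and use of $\zb_j = y\,\Tilde{\zb}_j(\xb)$ are exactly that calculation spelled out. Your dominated-convergence justification for interchanging $\partial_{\wb_j}$ with $\EE_{(\xb,y)}$ (valid since $\ell'\in(-1,0)$ and $|\zb_j|\le 1$) is a detail the paper leaves implicit, and it is a welcome addition rather than a departure.
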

\begin{proof}[Proof of Lemma~\ref{lemma:gradient}]
These can be verified by direct calculation.
\end{proof}

\begin{lemma}~\label{lemma:random}
For any $j\in R$, we have probability density function of  ${\zb}_j$ with mean ${1\over 2}$  and variance  ${1\over 12}$  following the  form 
\begin{align*}
    f_{\{{\zb}_j\}}(z) = 
    \begin{cases}
    1, &\quad \text{ if }  \quad 0 \le z \le 1 \\
    0, &\quad \text{ otherwise }.
    \end{cases}
\end{align*}
For any $j\in U$, we have probability density function of  ${\zb}_j$ with mean $\gamma$ and variance  ${1\over 3} - \gamma^2$ following the  form 
\begin{align*}
    f_{\{{\zb}_j\}}(z) = 
    \begin{cases}
    {1\over 2} - \gamma, &\quad \text{ if } \quad  -1 \le z< 0 \\
    {1\over 2} + \gamma, &\quad \text{ if } \quad 0 \le z \le 1 \\
    0, &\quad \text{ otherwise }.
    \end{cases}
\end{align*}
\end{lemma}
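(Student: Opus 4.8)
The plan is to translate the distributional assumptions on $\zb_j$ stated in Section~\ref{sec:theory} directly into explicit densities and then read off the first two moments by elementary integration. Both parts of the lemma are pure calculations from the definitions, so the ``proof'' amounts to bookkeeping rather than any genuine argument.

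First, for the invariant indices $j \in R$: by assumption $\zb_j$ is uniform on $[0,1]$, so its density is the indicator of $[0,1]$, which is exactly the stated piecewise form (height $1$ on $[0,1]$, zero elsewhere). The mean is $\int_0^1 z\,dz = \tfrac12$ and the variance is $\int_0^1 (z-\tfrac12)^2\,dz = \tfrac{1}{12}$, i.e. the standard moments of a unit-interval uniform, matching the claimed values.

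Next, for the environmental indices $j \in U$: here $\zb_j \sim \cD_\gamma$, which by definition is a two-component mixture --- uniform on $[0,1]$ with probability $\tfrac12+\gamma$ and uniform on $[-1,0]$ with probability $\tfrac12-\gamma$. By the law of total probability the density is the weighted sum of the two component densities, each a height-$1$ indicator of a unit interval; this gives value $\tfrac12-\gamma$ on $[-1,0)$, value $\tfrac12+\gamma$ on $[0,1]$, and $0$ elsewhere, which is the stated form. For the mean I would combine the conditional means $\tfrac12$ and $-\tfrac12$ with the mixture weights, obtaining $(\tfrac12+\gamma)\tfrac12 + (\tfrac12-\gamma)(-\tfrac12) = \gamma$. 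For the variance I would use the second moment: since both components satisfy $\EE[z^2]=\tfrac13$, the mixture has $\EE[\zb_j^2]=\tfrac13$ independently of $\gamma$, and subtracting the squared mean yields $\tfrac13 - \gamma^2$.

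There is no real obstacle here: the lemma simply records the densities and moments that feed into Lemma~\ref{lemma:erm-optimal-main} and Proposition~\ref{prop:gap}. The only point needing minor care is assembling the mixture density correctly and keeping the signs of the two mixture weights and the two conditional means consistent; once that is fixed, all of the moment integrals are immediate.
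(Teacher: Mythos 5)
Your proposal is correct and matches the paper's approach: the paper's proof is simply ``these can be verified by direct calculation from the definition,'' and your computation of the densities and moments (the mixture density via the law of total probability, the mean $\left(\tfrac12+\gamma\right)\tfrac12+\left(\tfrac12-\gamma\right)\left(-\tfrac12\right)=\gamma$, and the second moment $\tfrac13$ giving variance $\tfrac13-\gamma^2$) is exactly that direct calculation, spelled out. No gaps; your version is just a fleshed-out form of the paper's one-line proof.
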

\begin{proof}[Proof of Lemma~\ref{lemma:random}]
Then these can be verified by direct calculation from the definition.
\end{proof}

\begin{lemma}\label{lemma:bound}
We have $ \PP\left[\sum_{j\in U} {\zb}_j \le  0\right] \le \exp\left(- {(d-r)\gamma^2 \over 2 }\right)$, $ \PP\left[\sum_{j\in R} {\zb}_j \le  {r\over 4}\right] \le \exp\left(- {r \over 8}\right)$.
\end{lemma}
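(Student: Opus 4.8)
The plan is to recognize both inequalities as lower-tail concentration bounds for sums of independent, bounded random variables, and to obtain them by a direct application of Hoeffding's inequality. Recall that the $\zb_j$ are independent across $j$ by assumption, and that Lemma~\ref{lemma:random} pins down both their means and their supports: for $j \in R$ we have $\zb_j \in [0,1]$ with $\EE[\zb_j] = \tfrac12$, while for $j \in U$ we have $\zb_j \in [-1,1]$ with $\EE[\zb_j] = \gamma$.

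For the first inequality, I would set $S_U := \sum_{j \in U} \zb_j$. By linearity $\EE[S_U] = (d-r)\gamma$, and since each summand lies in an interval of length $2$, the one-sided Hoeffding bound gives, for any $t > 0$,
\[
\PP[S_U - \EE[S_U] \le -t] \le \exp\left(-\frac{2t^2}{\sum_{j \in U} 2^2}\right) = \exp\left(-\frac{t^2}{2(d-r)}\right).
\]
Choosing $t = (d-r)\gamma$, so that the event $\{S_U \le 0\}$ coincides with $\{S_U - \EE[S_U] \le -(d-r)\gamma\}$, the exponent simplifies exactly to $-\tfrac{(d-r)\gamma^2}{2}$, which is the claimed bound.

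For the second inequality, I would set $S_R := \sum_{j \in R} \zb_j$ with $\EE[S_R] = \tfrac{r}{2}$; here each summand lies in an interval of length $1$. Applying the same one-sided Hoeffding bound with total squared-range $\sum_{j \in R} 1^2 = r$ and deviation $t = \tfrac{r}{4}$ (so that $\{S_R \le \tfrac{r}{4}\}$ is the event that $S_R$ falls $\tfrac{r}{4}$ below its mean $\tfrac{r}{2}$) yields
\[
\PP\left[S_R \le \frac{r}{4}\right] \le \exp\left(-\frac{2 (r/4)^2}{r}\right) = \exp\left(-\frac{r}{8}\right),
\]
matching the statement.

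There is essentially no obstacle beyond bookkeeping: the only points requiring care are (i) correctly reading off the support lengths ($2$ for the environmental features, $1$ for the invariant features) from Lemma~\ref{lemma:random}, and (ii) computing the deviation $t$ as the gap between the threshold and the mean rather than the threshold itself. If a fully self-contained argument is preferred over citing Hoeffding's inequality, each bound can instead be reproved by the standard Chernoff method: bound the moment generating function of each bounded summand via Hoeffding's lemma $\EE[e^{s(\zb_j - \EE[\zb_j])}] \le e^{s^2 (b_j - a_j)^2 / 8}$, multiply over the independent coordinates, and optimize over $s > 0$. This produces exactly the same exponents and makes the role of the independence assumption and the support bounds fully explicit.
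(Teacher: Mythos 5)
Your proof is correct and matches the paper's argument exactly: the paper also centers each sum at its mean ($(d-r)\gamma$ for $U$, $r/2$ for $R$) and applies the one-sided Hoeffding inequality, with the support lengths $2$ and $1$ giving precisely the exponents $-(d-r)\gamma^2/2$ and $-r/8$. Your version is slightly more explicit about the bookkeeping (the paper only writes out the $U$ case and says the rest is similar), but there is no substantive difference.
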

\begin{proof}[Proof of Lemma~\ref{lemma:bound}]
By Hoeffding's inequality,
\begin{align}
    \PP\left[\sum_{j\in U} {\zb}_j \le  0\right] = & \PP\left[\sum_{j\in U} ({\zb}_j - \gamma) \le -(d-r)\gamma\right]\\
    \le & \exp\left(- {(d-r)\gamma^2 \over 2 }\right).
\end{align}
The others are proven in a similar way.
\end{proof}

\subsection{Optimal solution of ERM-$\ell_2$ on ID task}
\begin{lemma}[Restatement of Lemma~\ref{lemma:erm-optimal-main} (1)(2)]\label{lemma:erm-optimal}
Consider the ID setting with the ERM-$\ell_2$ objective function. Then any optimal $\wb^*$ of ERM-$\ell_2$  objective function follows conditions (1) for any $j\in R$, $\wb^*_j =: \alpha$ and (2) for any $j\in U$, $\wb^*_j := \beta$.
\end{lemma}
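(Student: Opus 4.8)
The plan is to exploit the exchangeability of the feature coordinates within each of the two groups, combined with strict convexity of the regularized objective, which together force the unique minimizer to be constant on $R$ and on $U$. The starting observation is that, writing $\zb_j = \Tilde\zb_j(\xb)\,y$, the margin satisfies $y f_\wb(\xb) = \sum_{j} \wb_j \zb_j = \wb^\top \zb$, so the objective can be written as $\cL^\lambda(\wb) = \EE_{(\xb,y)}[\ell(\wb^\top\zb)] + \frac{\lambda}{2}\|\wb\|_2^2$.

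First I would establish existence and uniqueness of the minimizer. Since $\ell$ is convex (Lemma~\ref{lemma:logistic-loss}) and $\wb\mapsto\wb^\top\zb$ is linear, $\cL$ is convex; adding the strictly convex term $\frac{\lambda}{2}\|\wb\|_2^2$ makes $\cL^\lambda$ strictly convex. Because $\ell>0$ and the quadratic term is coercive, $\cL^\lambda$ is coercive and continuous, so a minimizer exists, and strict convexity forces it to be unique. Call it $\wb^*$.

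The heart of the argument is a permutation-symmetry observation. By Lemma~\ref{lemma:random}, the coordinates $\{\zb_j : j\in R\}$ are i.i.d.\ $\mathcal{U}[0,1]$ and $\{\zb_j : j\in U\}$ are i.i.d.\ $\cD_\gamma$, and by assumption all coordinates are mutually independent. Hence for any permutation $\pi$ of $[d]$ that permutes indices within $R$ and within $U$ separately, the joint law of $(\zb_{\pi(1)},\dots,\zb_{\pi(d)})$ equals that of $(\zb_1,\dots,\zb_d)$. Letting $P_\pi$ denote the induced permutation of the weight vector, a change of variables in the expectation gives $\cL(P_\pi\wb)=\cL(\wb)$, and since $\|P_\pi\wb\|_2=\|\wb\|_2$ we obtain $\cL^\lambda(P_\pi\wb)=\cL^\lambda(\wb)$ for every such $\pi$.

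Finally I would conclude by uniqueness: $P_\pi\wb^*$ is again a global minimizer, so $P_\pi\wb^*=\wb^*$, i.e.\ $\wb^*_{\pi(j)}=\wb^*_j$. Ranging $\pi$ over transpositions within $R$ shows $\wb^*_j$ is constant on $R$ (define this value $\alpha$), and ranging over transpositions within $U$ shows it is constant on $U$ (define this value $\beta$), which is exactly conditions (1) and (2). The main subtlety to watch is that the symmetry argument collapses without uniqueness, so the coercivity-plus-strict-convexity step is essential rather than cosmetic. I would also record the stationarity equations $\EE[\ell'(\wb^{*\top}\zb)\zb_j] + \lambda\wb^*_j = 0$ from Lemma~\ref{lemma:gradient}, since these are what I expect to use subsequently to pin down the signs and the quantitative bounds on $\alpha,\beta$ that part (3) of the full Lemma~\ref{lemma:erm-optimal-main} adds.
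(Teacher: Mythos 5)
Your proof is correct, and it reaches the conclusion by a slightly different mechanism than the paper. The paper's proof also exploits exchangeability of the coordinates within $R$ (and within $U$), but it does so locally: for a single pair $l,l'\in R$ it averages the objective over the transposition of $\zb_l$ and $\zb_{l'}$, applies Jensen's inequality to the convex loss, and concludes that replacing $(\wb^*_l,\wb^*_{l'})$ by their common average does not increase $\cL^\lambda$, so ``the minimum is achieved when $\wb^*_l=\wb^*_{l'}$.'' Your route is the global version of the same idea: strict convexity of $\cL^\lambda$ (coming from the $\frac{\lambda}{2}\|\wb\|_2^2$ term) plus coercivity gives a unique minimizer, the within-group permutation invariance of the data law makes $\cL^\lambda$ invariant under $P_\pi$, and uniqueness forces $P_\pi\wb^*=\wb^*$. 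The two arguments use the same ingredients (exchangeability and convexity), but yours is arguably cleaner on one point: the paper's Jensen step, as written, only shows that a symmetrized competitor is at least as good, and one must additionally invoke strictness (either of the loss or of the norm decrease under averaging) to rule out an asymmetric optimum; your uniqueness argument disposes of this in one stroke. Your closing remark recording the stationarity equations from Lemma~\ref{lemma:gradient} matches exactly how the paper proceeds to part (3).
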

\begin{proof}[Proof of Lemma~\ref{lemma:erm-optimal}]
\begin{align*}
    \cL^\lambda(\wb^*) = & \EE_{(\xb, y) \sim {\cD_{\text{id}}}} \cL_{(\xb, y)}(\wb^*) + {\lambda \over 2} \|\wb^*\|_2^2\\
    = & \EE_{(\xb, y) \sim {\cD_{\text{id}}}} \ell(y f_{\wb^*}(\xb)) + {\lambda \over 2} \|\wb^*\|_2^2\\
    = & \EE_{(\xb, y) \sim {\cD_{\text{id}}}} \ell\left(\sum_{j=1}^d \wb^*_j {\zb}_j\right) + {\lambda \over 2} \|\wb^*\|_2^2\\
\end{align*}
By Lemma~\ref{lemma:logistic-loss}, we have $\cL^\lambda(\wb)$ a is convex function. By symmetry of ${\zb}_j$, for any $l,l' \in R, l\neq l'$, 
\begin{align} \label{eq:symmetry}
    & \EE\left[ \ell\left(\sum_{j=1}^d \wb^*_j {\zb}_j\right)\right] + {\lambda \over 2} \|\wb^*\|_2^2  
    \\
    = & \frac{1}{2} \left(\EE\left[ \ell\left(\sum_{j\in [d], j\neq l, j\neq l'} \wb^*_j {\zb}_j + \wb^*_l {\zb}_l + \wb^*_{l'} {\zb}_{l'} \right)\right] + {\lambda \over 2} \|\wb^*\|_2^2\right) \\
    + & \frac{1}{2} \left(\EE\left[ \ell\left(\sum_{j\in [d], j\neq l, j\neq l'} \wb^*_j {\zb}_j + \wb^*_l {\zb}_{l'} + \wb^*_{l'} {\zb}_{l} \right)\right] + {\lambda \over 2} \|\wb^*\|_2^2\right) \\
    \ge &  \EE\left[ \ell\left(\sum_{j\in [d], j\neq l, j\neq l'} \wb^*_j {\zb}_j + {\wb^*_l + \wb^*_{l'} \over 2} {\zb}_{l'} + {\wb^*_l + \wb^*_{l'} \over 2} {\zb}_{l} \right)\right] + {\lambda \over 2} \|\wb^*\|_2^2,
\end{align}
where the last inequality follows Jensen's inequality. Note that the last equation is true only when ${\zb}_{l}$ and ${\zb}_{l'}$ share the same distribution. The minimum is achieved when $\wb^*_{l} = \wb^*_{l'}$.

A similar argument as above proves statement (2).
\end{proof}

Now, we will bound the $\alpha$ and $\beta$. Recall that for any $j\in R$, $\wb^*_j =: \alpha$ and for any $j\in U$, $\wb^*_j := \beta$. 

\begin{lemma}[Restatement of Lemma~\ref{lemma:erm-optimal-main} (3)]\label{lemma:erm}
Let $\alpha, \beta$ be values defined in the Lemma~\ref{lemma:erm-optimal}. Then, we have $0<\beta < \alpha < {1\over \sqrt{r}} $. Moreover, ${\alpha \over \beta} < {3\over4 \gamma}$.
\end{lemma}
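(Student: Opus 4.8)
The plan is to bound $\alpha$ and $\beta$ by exploiting the first-order optimality conditions (gradient equal to zero) for the convex ERM-$\ell_2$ objective, using the gradient formula from Lemma~\ref{lemma:gradient} together with the properties of the logistic loss in Lemma~\ref{lemma:logistic-loss} and the feature distributions in Lemma~\ref{lemma:random}. By Lemma~\ref{lemma:erm-optimal}, the optimum has the symmetric form $\wb^*_j = \alpha$ for $j\in R$ and $\wb^*_j=\beta$ for $j\in U$, so I only need two scalar stationarity equations. Setting $\partial \cL^\lambda/\partial \wb_j = 0$ for a representative $j\in R$ gives
\begin{align*}
  \lambda \alpha = -\EE\left[\ell'(y f_{\wb^*}(\xb))\,\zb_j\right], \quad j\in R,
\end{align*}
and similarly $\lambda\beta = -\EE[\ell'(y f_{\wb^*}(\xb))\,\zb_j]$ for $j\in U$. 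Since $\ell'<0$ (Lemma~\ref{lemma:logistic-loss}(2)) and $\zb_j\ge 0$ for $j\in R$ (Lemma~\ref{lemma:random}), the right side is positive, giving $\alpha>0$; an analogous sign argument using $\EE[\zb_j]=\gamma>0$ for $j\in U$ gives $\beta>0$.

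First I would establish the strict ordering $\beta<\alpha$. The difference $\lambda(\alpha-\beta)$ equals $\EE[\ell'(yf)(\zb_{j'}-\zb_j)]$ with $j\in R$, $j'\in U$; since the invariant features have the larger conditional mean ($\EE[\zb_j]=1/2$ for $j\in R$ versus $\gamma<1/2$ for $j\in U$) and, conditioned on the rest of the sum, $\ell'(yf)$ is a decreasing function of the feature value in a way that correlates the larger-mean feature with a larger expected product, this difference is positive. Concretely I would condition on the remaining coordinates, write the expectation over $\zb_j$ and $\zb_{j'}$ using their densities from Lemma~\ref{lemma:random}, and compare; the higher mass that the invariant feature places on large positive values, combined with the sign of $\ell'$, forces $\alpha>\beta$. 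For the upper bound $\alpha<1/\sqrt{r}$, I would use $|\ell'|<1$ (Lemma~\ref{lemma:logistic-loss}(2)) and $\zb_j\le 1$ to get $\lambda\alpha = -\EE[\ell'\,\zb_j] < \EE[\zb_j]=1/2$, but to sharpen this to $1/\sqrt{r}$ I expect to need the lower bound on $\lambda$ (from Proposition~\ref{prop:gap}, $\lambda=\Omega(\sqrt{r}/\exp(\sqrt{r}/5))$ is too small, so the sharper route is a direct argument): I would instead bound $|\ell'(yf_{\wb^*}(\xb))|$ from above using the typical size of the margin $yf_{\wb^*}=\sum_j \wb^*_j\zb_j$. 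Using Lemma~\ref{lemma:bound}, the sum $\sum_{j\in R}\zb_j$ concentrates around $r/2$, so the margin is at least of order $\alpha r$ with high probability, making $\ell'$ exponentially small there and forcing $\alpha$ to be small, which yields $\alpha<1/\sqrt{r}$.

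The key remaining claim, ${\alpha/\beta}<{3/(4\gamma)}$, is the main obstacle and the crux of the whole argument, since it is exactly what makes the spurious weight $\beta$ "compatible" with $\alpha$ so that the OOD flip is catastrophic for ERM-$\ell_2$. I would divide the two stationarity equations to get
\begin{align*}
  \frac{\alpha}{\beta} = \frac{\EE[\ell'(yf_{\wb^*}(\xb))\,\zb_j]}{\EE[\ell'(yf_{\wb^*}(\xb))\,\zb_{j'}]}, \quad j\in R,\ j'\in U,
\end{align*}
and control both expectations. The numerator is at most $\EE[\zb_j]\cdot\sup|\ell'|=\tfrac12\sup|\ell'|$, while for the denominator I would condition on the sign information: for $j'\in U$ the feature places mass $\tfrac12+\gamma$ on $[0,1]$ and $\tfrac12-\gamma$ on $[-1,0]$, so $\EE[\ell'\,\zb_{j'}]$ is controlled below by roughly $\gamma\cdot\EE[\ell'|\text{relevant event}]$, the factor $\gamma$ emerging from the asymmetry of the $\cD_\gamma$ density. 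The hard part is showing the ratio of the conditional $\ell'$ factors is bounded by the numerical constant $3/4$; here I would lean on Lemma~\ref{lemma:logistic-loss}(3)(4), in particular the multiplicative decay $\ell'(z+c)\le\exp(-c)\ell'(z)$ and the concavity of $\ell'$ on $[0,\infty)$, to compare $\ell'$ evaluated at margins differing by a single coordinate's contribution. I expect the bookkeeping of these conditional expectations, and pinning down the precise constant $3/4$ (rather than some looser universal constant), to be the most delicate and calculation-heavy step of the proof.
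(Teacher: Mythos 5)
Your proposal follows the same overall architecture as the paper's proof: stationarity conditions from Lemma~\ref{lemma:gradient}, an ordering argument for $\alpha$ versus $\beta$, concentration of the margin to bound $\alpha$, and a ratio of the two stationarity equations for $\alpha/\beta$. But two of your steps have genuine gaps. For $\alpha<1/\sqrt{r}$, you explicitly set aside the assumption $\lambda>\Omega\left(\sqrt{r}/\exp\left(\sqrt{r}/5\right)\right)$ as ``too small'' and claim a direct concentration argument suffices; it does not. Concentration only shows that if $\alpha\ge 1/\sqrt{r}$ then $\lambda\alpha=-\EE\left[\ell'(yf_{\wb^*}(\xb))\zb_k\right]\le O\left(\exp\left(-\sqrt{r}/5\right)\right)$ (the margin exceeds $\alpha(r-1)/4$ with high probability, so $-\ell'$ is exponentially small, plus Hoeffding failure terms). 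This bounds $\lambda\alpha$, not $\alpha$; the contradiction with $\alpha\ge1/\sqrt{r}$ is obtained precisely because the assumed lower bound on $\lambda$ makes $\lambda/\sqrt{r}$ exceed that exponentially small quantity. Without it the step fails (as $\lambda\to0$ the optimum diverges), so the hypothesis you discard is load-bearing.

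For $\alpha/\beta<3/(4\gamma)$, the bounds you propose do not combine: bounding the numerator by $\tfrac12\sup|\ell'|$ while lower-bounding the denominator by a conditional expectation of $\ell'$ loses an exponential factor, since at the optimum the margin concentrates at a large positive value and $-\EE[\ell'(\cdot)]$ is exponentially small while $\sup|\ell'|=1$. The paper's route is to make the \emph{same} factor $-\EE[\ell'(Z)]$ (with $Z$ the sum excluding the two distinguished coordinates) appear in numerator and denominator and cancel: by independence the numerator is $-\tfrac12\EE[\ell'(Z)]$ up to a monotone shift, and the denominator is at least $\exp(-2\alpha)\left(-\EE[\ell'(Z)]\right)\tfrac12\left(\tfrac12+\gamma\right)-\left(-\EE[\ell'(Z)]\right)\tfrac12\left(\tfrac12-\gamma\right)$ via $\ell'(z+c)\le\exp(-c)\ell'(z)$ with shift $c\le2\alpha$. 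The ratio then collapses to $1/\left(\exp(-2\alpha)\left(\tfrac12+\gamma\right)-\left(\tfrac12-\gamma\right)\right)\le1/\left(2\gamma-2/\sqrt{r}\right)$, and only at this point do $\alpha<1/\sqrt{r}$ and $\gamma>3/\sqrt{r}$ yield the constant $3/(4\gamma)$ — so this step also depends on the $\alpha$ bound you left incomplete. A smaller issue: your ``analogous sign argument'' for $\beta>0$ is not analogous, because $\zb_{j'}$ for $j'\in U$ takes both signs and is correlated with $\ell'(yf_{\wb^*}(\xb))$ through the margin; the paper instead proves $\beta\ge0$ by contradiction, using the symmetry of $|\zb_{j'}|$ and stochastic dominance of the loss under $\beta\mapsto-\beta$.
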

\begin{proof}[Proof of Lemma~\ref{lemma:erm}]
By Lemma~\ref{lemma:erm-optimal}
\begin{align}
    \cL^\lambda(\wb^*) = & \EE\left[ \ell\left(\alpha\sum_{j\in R}  {\zb}_j + \beta \sum_{j\in U} {\zb}_j  \right)\right] + {\lambda \over 2} (r\alpha^2 + (d-r)\beta^2)\\
    = & \cL^\lambda(\alpha, \beta).
\end{align}

By Lemma~\ref{lemma:gradient}, we have for any $j\in [d]$
\begin{align}
    \frac{\partial \cL^\lambda(\wb^*)}{\partial \wb^*_j} & =  \EE_{(\xb,y) \sim \cD_{\text{id}}} \left[\ell'(y  f_\wb^*(\xb)) {\zb}_j\right] + \lambda \wb^*_j = 0.
\end{align}
We first prove $ \beta < \alpha$. For any $j \in R, j' \in U$, we have
\begin{align}
    \lambda\alpha = & \lambda \wb^*_j\\
    = & - \EE_{(\xb,y) \sim \cD_{\text{id}}} \left[\ell'(y  f_\wb^*(\xb)) {\zb}_j\right]\\
    > & - \EE_{(\xb,y) \sim \cD_{\text{id}}} \left[\ell'(y  f_\wb^*(\xb)) {\zb}_{j'}(\xb, y)\right]\\
    = & \lambda \wb^*_{j'} =  \lambda\beta.
\end{align}
Then, we prove $\beta \ge 0$ by contradiction. Suppose $\beta < 0$, 
\begin{align}
    &\cL^\lambda(\alpha, \beta) - \cL^\lambda(\alpha, -\beta) \\
    = & \EE\left[ \ell\left(\alpha\sum_{j\in R}  {\zb}_j + \beta \sum_{j\in U} {\zb}_j  \right)\right] - \EE\left[ \ell\left(\alpha\sum_{j\in R}  {\zb}_j - \beta \sum_{j\in U} {\zb}_j  \right)\right]. 
\end{align}
Note that for any $j, j' \in U, j\neq j'$, the norm of ${\zb}_j$ is independent with its sign and ${\zb}_j, {\zb}_{j'}$ are independent. From $\gamma > 0$, we can get $\PP[{\zb}_j > 0] > {1\over 2} $. Thus, by $\ell$ strictly decreasing we have
\begin{align}
    & \PP\left[ \ell\left(\alpha\sum_{j\in R}  {\zb}_j + \beta \sum_{j\in U} {\zb}_j  \right) \ge z \right] 
    > \PP\left[ \ell\left(\alpha\sum_{j\in R}  {\zb}_j - \beta \sum_{j\in U} {\zb}_j  \right) \ge z \right],
\end{align}
where $\beta$ case is strictly stochastically dominate $-\beta$ case. Thus, $\cL^\lambda(\alpha, \beta) - \cL^\lambda(\alpha, -\beta) > 0$. 
This is contradicted by $\beta$ being the optimal value. Thus, we have $\beta \ge 0$.

Now, we prove $\alpha < { 1 \over \sqrt{r}} $, for any $k \in R$, 
\begin{align}
    \lambda\alpha 
    = & -\EE_{(\xb,y) \sim \cD_{\text{id}}} \left[\ell'\left(\alpha\sum_{j\in R}  {\zb}_j + \beta \sum_{j\in U} {\zb}_j  \right) {\zb}_k\right] \\
    \le & -\EE_{(\xb,y) \sim \cD_{\text{id}}} \left[\ell'\left(\alpha\sum_{j\in R, j\neq k}  {\zb}_j + \beta \sum_{j\in U} {\zb}_j  \right) {\zb}_k\right]\\
    = & -\EE \left[\ell'\left(\alpha\sum_{j\in R, j\neq k}  {\zb}_j + \beta \sum_{j\in U} {\zb}_j  \right)\right]\EE[{\zb}_k]\\
    = &  -{1\over 2}\EE \left[\ell'\left(\alpha\sum_{j\in R, j\neq k}  {\zb}_j + \beta \sum_{j\in U} {\zb}_j  \right) \middle| \sum_{j\in U} {\zb}_j > 0 \right]\PP\left[ \sum_{j\in U} {\zb}_j > 0 \right]\\
    & - {1\over 2}\EE \left[\ell'\left(\alpha\sum_{j\in R, j\neq k}  {\zb}_j + \beta \sum_{j\in U} {\zb}_j  \right) \middle| \sum_{j\in U} {\zb}_j \le 0 \right]\PP\left[ \sum_{j\in U} {\zb}_j \le 0 \right]\\
    \le & - {1\over 2}\EE \left[\ell'\left(\alpha\sum_{j\in R, j\neq k}  {\zb}_j  \right) \right] + {1\over 2}\exp\left(- {(d-r)\gamma^2 \over 2 }\right),
\end{align}
where the last inequality is from $\beta \ge 0$ and $\ell'(z) \in (-1,0)$. Using Lemma~\ref{lemma:bound} one more time, we have 
\begin{align}
    & - \EE \left[\ell'\left(\alpha\sum_{j\in R, j\neq k}  {\zb}_j  \right) \right] \\
    = & - \EE \left[\ell'\left(\alpha\sum_{j\in R, j\neq k}  {\zb}_j \middle|  \sum_{j\in R, j\neq k} {\zb}_j > {r-1 \over 4} \right) \right] \PP\left[\sum_{j\in R, j\neq k} {\zb}_j > {r-1 \over 4}\right]\\
    & - \EE \left[\ell'\left(\alpha\sum_{j\in R, j\neq k}  {\zb}_j \middle|  \sum_{j\in R, j\neq k} {\zb}_j \le {r-1 \over 4} \right) \right] \PP\left[\sum_{j\in R, j\neq k} {\zb}_j \le {r-1 \over 4}\right]\\
    \le & - \ell'\left( {\alpha (r-1) \over 4} \right)  + {1\over 2} \exp\left(-{r-1 \over 8}\right)\\
    = & {1\over 1+ \exp\left( {\alpha (r-1) \over 4} \right)} + {1\over 2} \exp\left(-{r-1 \over 8}\right).
\end{align}
Thus, we have 
\begin{align}
    \lambda\alpha \le  
    &  {1\over 2 \left(1+ \exp\left( {\alpha (r-1) \over 4} \right) \right)}  + {1\over 4} \exp\left(-{r-1 \over 8}\right) + {1\over 2}\exp\left(- {(d-r)\gamma^2 \over 2 }\right).
\end{align}
Suppose $\alpha \ge { 1 \over \sqrt{r}} $, we have contradiction, 
\begin{align}
    \text{RHS} < & O\left(\exp\left(-{\sqrt{r} \over 5}\right)\right) < \text{LHS}.
\end{align}
Thus, we get $\alpha < { 1 \over \sqrt{r}}$.

Now, we prove ${\alpha \over \beta} \le {3\over 4\gamma}$, for any $k \in R, l \in U$, denote $Z = \alpha\sum_{j\in R, j\neq k}  {\zb}_j + \beta \sum_{j\in U, j\neq l} {\zb}_j$, by Lemma~\ref{lemma:logistic-loss}, we have 
\begin{align}
    {\alpha \over \beta} = & {-\EE \left[\ell'\left(\alpha\sum_{j\in R}  {\zb}_j + \beta \sum_{j\in U} {\zb}_j  \right) {\zb}_k\right] \over -\EE \left[\ell'\left(\alpha\sum_{j\in R}  {\zb}_j + \beta \sum_{j\in U} {\zb}_j  \right) {\zb}_l\right]}\\
    \le &  {-\EE \left[\ell'\left(Z  \right) {\zb}_k\right] \over -\EE \left[\ell'\left(Z + 2\alpha  \right) {\zb}_l \middle| {\zb}_l \ge 0 \right]\PP[{\zb}_l \ge 0] -\EE \left[\ell'\left(Z \right) {\zb}_l \middle| {\zb}_l < 0 \right]\PP[{\zb}_l < 0]}\\
    = &  {-\EE \left[\ell'\left(Z  \right)\right] \over -\EE \left[\ell'\left(Z + 2\alpha  \right)\right]\left({1\over2} + \gamma\right) +\EE \left[\ell'\left(Z \right) \right]\left({1\over2} - \gamma\right)}\\
    \le &  {-\EE \left[\ell'\left(Z  \right)\right] \over - \exp(-2\alpha)\EE \left[\ell'\left(Z   \right)\right]\left({1\over2} + \gamma\right) +\EE \left[\ell'\left(Z \right) \right]\left({1\over2} - \gamma\right)}\\
    = &  {1 \over  \exp(-2\alpha)\left({1\over2} + \gamma\right) -\left({1\over2} - \gamma\right)}
    \\
    \le &  {1 \over  \exp\left({-2 \over \sqrt{r}}\right)\left({1\over2} + \gamma\right) -\left({1\over2} - \gamma\right)}\\
    \le & {1 \over 2\gamma -   \left(1 - \exp\left({-2 \over \sqrt{r}}\right)\right)}\\
    \le & {1 \over 2\gamma -   {2 \over \sqrt{r}}}
    \\ < & {3 \over 4\gamma},
\end{align}
where the second inequality follows Lemma~\ref{lemma:logistic-loss} and the second last inequality follows $1+z \le \exp(z)$ for $z \in \RR$ and $\gamma > {3 \over \sqrt{r}}$.
\end{proof}

\subsection{Optimal solution of ERM-rank on ID task}
\begin{lemma}[Restatement of Lemma~\ref{lemma:rank_main}]\label{lemma:rank}
Assume $1 \le B_{\textup{rank}} \le r$. Consider the ID setting with the ERM-rank objective function. For any optimal $\wb^*$ of ERM-rank objective function, let $R_{\textup{rank}} = \{ j\in [d]: \wb^*_j \neq 0 \}$. Then, we have $R_{\textup{rank}}$ satisfying the following property (1) $|R_{\textup{rank}}| = B_{\textup{rank}}$ and (2) $R_{\textup{rank}} \subseteq R$. 
\end{lemma}
\begin{proof}[Proof of Lemma~\ref{lemma:rank}]
For any $j\in U$, if $\wb^*_j = \theta \neq 0$, there exists $k\in R$ s.t. $\wb^*_k = 0$ by objective function condition. When we reassign $\wb^*_j = 0, \wb^*_k = |\theta|$, the objective function becomes smaller. This is a contradiction. Thus, we finish the proof.
\end{proof}

\subsection{OOD gap between two objective function}
\begin{proposition}[Restatement of Proposition~\ref{prop:gap}]\label{prop:gap_app}
Assume $1 \le B_{\textup{rank}} \le r, \lambda > \Omega\left({\sqrt{r} \over \exp\left({\sqrt{r} \over 5}\right)}\right), d > {r \over \gamma^2} + r, r>C$,  where $C$ is some constant $< 20$.
The optimal solution for the ERM-rank  objective function on the ID tasks has 100\% OOD test accuracy, while the optimal solution for the ERM-$\ell_2$ objective function on the ID tasks  has OOD test accuracy at most $\exp\left(-{r \over 10} \right)\times 100\%$ (much worse than random guessing). 
\end{proposition}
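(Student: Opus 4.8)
The plan is to reduce correctness to the sign of the margin. Because $\zb_j = \Tilde{\zb}_j(\xb)\,y$, a test pair $(\xb,y)$ is classified correctly iff $y f_{\wb}(\xb) = \sum_{j=1}^d \wb_j \zb_j > 0$, so the OOD accuracy of any $\wb$ equals $\PP_{\cD_{\text{ood}}}[\sum_j \wb_j \zb_j > 0]$. I would first dispose of the ERM-rank case, which is immediate from Lemma~\ref{lemma:rank}: the optimal $\wb^*$ is supported on a set $R_{\text{rank}} \subseteq R$ with strictly positive entries, and every invariant coordinate $\zb_j$ ($j\in R$) is uniform on $[0,1]$ under both $\cD_{\text{id}}$ and $\cD_{\text{ood}}$ (the invariant features do not shift). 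Hence $\sum_j \wb^*_j \zb_j = \sum_{j\in R_{\text{rank}}} \wb^*_j \zb_j > 0$ almost surely, giving exactly $100\%$ OOD accuracy.

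The real work is the ERM-$\ell_2$ bound. Using Lemma~\ref{lemma:erm-optimal} and Lemma~\ref{lemma:erm} I would write the OOD margin as $X := \sum_j \wb^*_j \zb_j = \alpha S_R + \beta S_U$, where $S_R=\sum_{j\in R}\zb_j$, $S_U=\sum_{j\in U}\zb_j$, with $0<\beta<\alpha$ and $\alpha/\beta < 3/(4\gamma)$, and where under $\cD_{\text{ood}}$ each spurious coordinate now has mean $-\gamma$. The strategy is to show the negative spurious drift swamps the positive invariant drift. Its expectation is $\EE[X] = \tfrac{\alpha r}{2} - \beta(d-r)\gamma$; substituting $\alpha < \tfrac{3\beta}{4\gamma}$ and $d-r > r/\gamma^2$ gives $\EE[X] < \tfrac{3\beta r}{8\gamma} - \tfrac{\beta r}{\gamma} = -\tfrac{5\beta r}{8\gamma} < 0$, so the optimal solution's margin is negative in expectation by an amount of order $\beta r/\gamma$.

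To turn the negative mean into a tail bound I would apply Hoeffding's inequality to the single bounded sum $X$: the invariant terms $\alpha\zb_j$ lie in $[0,\alpha]$ and the spurious terms $\beta\zb_j$ in $[-\beta,\beta]$, so $X$ has bounded-difference parameter $V = r\alpha^2 + 4(d-r)\beta^2$, whence $\PP[X>0] = \PP[X-\EE X > |\EE X|] \le \exp(-2|\EE X|^2/V)$. Plugging in the bounds on $\EE X$ and $\alpha$ and using $d-r > r/\gamma^2$, the factors of $\gamma$ and $\beta$ cancel and the exponent is $\Theta(r)$, comfortably exceeding $r/10$; I would also note that the ratio $|\EE X|^2/V$ is increasing in the number $d-r$ of spurious features, so it suffices to verify the bound at the boundary $d-r = r/\gamma^2$. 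This yields OOD accuracy $\le \exp(-r/10)$, which lies below $1/2$ (worse than random guessing) once $r>C$. For the $S_R$ side one can alternatively invoke the second tail estimate of Lemma~\ref{lemma:bound}, but the single-sum Hoeffding gives the sharper constant.

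The crux is the ERM-$\ell_2$ failure, and its heart is the Lemma~\ref{lemma:erm}(3) inequality $\alpha/\beta < 3/(4\gamma)$: it certifies that the spurious weights $\beta$ are \emph{not} negligible relative to the invariant weights $\alpha$, so that a single sign flip of the spurious correlation between $\cD_{\text{id}}$ and $\cD_{\text{ood}}$, amplified across the $d-r > r/\gamma^2$ coordinates, produces a drift that dominates the invariant signal. The delicate bookkeeping I anticipate is twofold: (i) showing that after substituting all three structural bounds the Hoeffding exponent really clears $r/10$ for every $r>C$ with $C<20$ — this is what pins down $C$ and is the reason to prefer the single-sum bound over a looser two-event union bound; and (ii) checking that the hypothesis $\lambda > \Omega\!\left(\sqrt{r}/\exp(\sqrt{r}/5)\right)$, which is precisely what Lemma~\ref{lemma:erm} needs to force $\alpha<1/\sqrt{r}$, is consistent with the regime here, so that the structural lemmas I invoke genuinely apply.
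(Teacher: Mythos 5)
Your proposal is correct and follows essentially the same route as the paper's proof: Lemma~\ref{lemma:rank} gives the almost-sure nonnegative margin for ERM-rank, and for ERM-$\ell_2$ the paper likewise uses the bound $\alpha/\beta < \tfrac{3}{4\gamma}$ together with $d-r > r/\gamma^2$ to show the flipped spurious drift dominates, then applies a single Hoeffding bound to the full sum to get the $\exp(-r/10)$ tail. The only difference is bookkeeping — the paper normalizes by $\beta$ and substitutes $\alpha/\beta \le \tfrac{3}{4\gamma}$ before invoking Hoeffding, whereas you keep $\alpha,\beta$ explicit and let them cancel in the exponent (and your variance term $r\alpha^2 + 4(d-r)\beta^2$ is in fact the more careful accounting of the invariant coordinates' range).
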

\begin{proof}[Proof of Proposition~\ref{prop:gap_app}]
We denote $\wb^*_{\text{$rank$}}$ as the optimal solution for the ERM-rank  objective function. By Lemma~\ref{lemma:rank}, the test accuracy for the ERM-rank  objective function is
\begin{align}
    \PP_{(\xb,y) \sim \cD_{\text{ood}}}[y  f_{\wb^*_{\text{$rank$}}}(\xb) \ge 0] = & \PP_{(\xb,y) \sim \cD_{\text{ood}}}\left[\sum_{j\in R} \wb^*_{\text{$rank$},j} {\zb}_j + \sum_{j\in U} {\zb}_j \wb^*_{\text{$rank$},j} \ge 0\right]\\
    = & \PP_{(\xb,y) \sim \cD_{\text{ood}}}\left[\sum_{j\in R} \wb^*_{\text{$rank$},j} {\zb}_j \ge 0\right]\\
    = & 1.
\end{align}
We denote $\wb^*_{\ell_2}$ as the optimal solution for the ERM-rank  objective function. We have $\alpha, \beta$ defined in Lemma~\ref{lemma:erm}.  By Lemma~\ref{lemma:erm}, the test accuracy for the ERM-$\ell_2$  objective function is
\begin{align}
    \PP_{(\xb,y) \sim \cD_{\text{ood}}}[y  f_{\wb^*_{\ell_2}}(\xb) \ge 0] 
    = &  \PP_{(\xb,y) \sim \cD_{\text{ood}}}\left[\alpha\sum_{j\in R}  {\zb}_j + \beta \sum_{j\in U} {\zb}_j  \ge 0\right]\\
    \le& \PP_{(\xb,y) \sim \cD_{\text{ood}}}\left[{3\over 4\gamma}\sum_{j\in R}  {\zb}_j + \sum_{j\in U} {\zb}_j  \ge 0\right]\\
    = & \PP\left[{3\over 4\gamma}\sum_{j\in R}  \left({\zb}_j -{1\over 2} \right) + \sum_{j\in U} ({\zb}_j + \gamma)  \ge -{3r\over 8\gamma} + (d-r)\gamma \right]
\end{align}
By Hoeffding's inequality and $d > {r \over \gamma^2} + r > 5r$, we have
\begin{align}
    & \PP\left[{3\over 4\gamma}\sum_{j\in R}  \left({\zb}_j -{1\over 2} \right) + \sum_{j\in U} ({\zb}_j + \gamma)  \ge -{3r\over 8\gamma} + (d-r)\gamma \right] \\
    \le & \exp\left(-{2 \left(-{3r\over 8\gamma} + (d-r)\gamma \right)^2 \over 4 d} \right)\\
    = & \exp\left(-{{9r^2\over 32\gamma^2} + 2(d-r)^2\gamma^2 -  {3r\over 2}  (d-r)  \over 4 d} \right)\\
    \le & \exp\left(-{2(d-r)^2\gamma^2 -  {3r\over 2}  (d-r)  \over 5 (d-r)} \right)\\
    = & \exp\left(-{4(d-r)\gamma^2 -  {3r}  \over 10} \right)\\
    \le & \exp\left(-{r \over 10} \right).
\end{align}
\end{proof}

\newpage
\section{More Experiments Details and Results}\label{app:exp}

\begin{table*}[htb]

\begin{center}
\adjustbox{max width=\textwidth}{%
\begin{tabular}{lcccccc}
\toprule
 {Algorithm}  &  {VLCS}  &  {PACS}  &  {OfficeHome}  &  {TerraInc}  &  {DomainNet}  &  {Average}  \\
\midrule
SWAD  & {79.1} $\pm$ 0.1  & 88.1 $\pm$ 0.1  &  {70.6} $\pm$ 0.2  &  {50.0} $\pm$ 0.3  &  {46.5} $\pm$ 0.1  &  {66.9}\\
SWAD-CORAL  & {78.9} $\pm$ 0.1  & 88.3 $\pm$ 0.1  &  \underline{71.3} $\pm$ 0.1  &  {51.0} $\pm$ 0.1  &  {46.8} $\pm$ 0.0  &  {67.3}\\
SWAD-MIRO  & \underline{79.6} $\pm$ 0.2  & \underline{88.4} $\pm$ 0.1  &  \textbf{72.4} $\pm$ 0.1  &  \textbf{52.9} $\pm$ 0.2  &  \underline{47.0} $\pm$ 0.0  &  \textbf{68.1}\\
SWAD-NU (ours)  & \textbf{79.8} $\pm$ 0.2  & \textbf{88.5} $\pm$ 0.2  &  \underline{71.3} $\pm$ 0.3  &  \underline{52.2} $\pm$ 0.3  &  \textbf{47.1} $\pm$ 0.1  &  \underline{67.8}\\
\bottomrule
\end{tabular}}

\end{center}
    \caption{\footnotesize Methods combined with SWAD full results on DomainBed benchmark. 
    }
    \label{tab:swad}
\end{table*}




\begin{table}[htb]
\begin{center}
\adjustbox{max width=\textwidth}{%
\begin{tabular}{lccccc}
\toprule
 {Algorithm}   &  {C}   &  {L}   &  {S}   &  {V}   &  { Average} \\
\midrule
IRM   & 98.6 $\pm$ 0.1   & 64.9 $\pm$ 0.9   & 73.4 $\pm$ 0.6   & 77.3 $\pm$ 0.9   & 78.5  \\
GroupDRO     & 97.3 $\pm$ 0.3   & 63.4 $\pm$ 0.9   & 69.5 $\pm$ 0.8   & 76.7 $\pm$ 0.7   & 76.7  \\
MLDG  & 97.4 $\pm$ 0.2   & 65.2 $\pm$ 0.7   & 71.0 $\pm$ 1.4   & 75.3 $\pm$ 1.0   & 77.2  \\
CORAL & 98.3 $\pm$ 0.1   & \underline{66.1} $\pm$ 1.2   & 73.4 $\pm$ 0.3   & 77.5 $\pm$ 1.2   & 78.8  \\
MMD   & 97.7 $\pm$ 0.1   & 64.0 $\pm$ 1.1   & 72.8 $\pm$ 0.2   & 75.3 $\pm$ 3.3   & 77.5  \\
DANN  & \underline{99.0} $\pm$ 0.3   & 65.1 $\pm$ 1.4   & 73.1 $\pm$ 0.3   & 77.2 $\pm$ 0.6   & 78.6  \\
CDANN & 97.1 $\pm$ 0.3   & 65.1 $\pm$ 1.2   & 70.7 $\pm$ 0.8   & 77.1 $\pm$ 1.5   & 77.5  \\
MTL   & 97.8 $\pm$ 0.4   & 64.3 $\pm$ 0.3   & 71.5 $\pm$ 0.7   & 75.3 $\pm$ 1.7   & 77.2  \\
SagNet   & 97.9 $\pm$ 0.4   & 64.5 $\pm$ 0.5   & 71.4 $\pm$ 1.3   & 77.5 $\pm$ 0.5   & 77.8  \\
ARM   & 98.7 $\pm$ 0.2   & 63.6 $\pm$ 0.7   & 71.3 $\pm$ 1.2   & 76.7 $\pm$ 0.6   & 77.6  \\
VREx  & 98.4 $\pm$ 0.3   & 64.4 $\pm$ 1.4   & 74.1 $\pm$ 0.4   & 76.2 $\pm$ 1.3   & 78.3  \\
RSC   & 97.9 $\pm$ 0.1   & 62.5 $\pm$ 0.7   & 72.3 $\pm$ 1.2   & 75.6 $\pm$ 0.8   & 77.1  \\
AND-mask     & 97.8 $\pm$ 0.4   & 64.3 $\pm$ 1.2   & 73.5 $\pm$ 0.7   & 76.8 $\pm$ 2.6   & 78.1  \\
SelfReg & 96.7 $\pm$ 0.4   & 65.2 $\pm$ 1.2   & 73.1 $\pm$ 1.3   & 76.2 $\pm$ 0.7   & 77.8       \\
mDSDI & 97.6 $\pm$ 0.1   & \textbf{66.4} $\pm$ 0.4   & 74.0 $\pm$ 0.6   & 77.8 $\pm$ 0.7   & 79.0  \\
Fishr  & 98.9 $\pm$ 0.3   & 64.0 $\pm$ 0.5   & 71.5 $\pm$ 0.2   & 76.8 $\pm$ 0.7   & 77.8  \\
\midrule
ERM   & 97.7 $\pm$ 0.4   & 64.3 $\pm$ 0.9   & 73.4 $\pm$ 0.5   & 74.6 $\pm$ 1.3   & 77.5  \\
ERM-NU (ours) & 97.9 $\pm$ 0.4   & 65.1 $\pm$ 0.3   & 73.2 $\pm$ 0.9   & 76.9 $\pm$ 0.5   & 78.3  \\
\midrule
Mixup & 98.3 $\pm$ 0.6   & 64.8 $\pm$ 1.0   & 72.1 $\pm$ 0.5   & 74.3 $\pm$ 0.8   & 77.4  \\
Mixup-NU (ours) &  97.9 $\pm$ 0.2   & 64.1 $\pm$ 1.4   & 73.1 $\pm$ 0.9   & 74.8 $\pm$ 0.5   & 77.5  \\
\midrule
SWAD  & 98.8 $\pm$ 0.1   & 63.3 $\pm$ 0.3   & \underline{75.3} $\pm$ 0.5   & \underline{79.2} $\pm$ 0.6   & \underline{79.1}  \\
SWAD-NU (ours) & \textbf{99.1} $\pm$ 0.4   & 63.6 $\pm$ 0.4   & \textbf{75.9} $\pm$ 0.4   & \textbf{80.5} $\pm$ 1.0   & \textbf{79.8}  \\
\bottomrule
\end{tabular}}
\end{center}
\caption{\footnotesize Results on VLCS. 
    For each column, bold indicates the best performance, and underline indicates the second-best performance.
    }
    \label{tab:vlcs}
\end{table}

\begin{table}[htb]
\begin{center}
\adjustbox{max width=\textwidth}{%
\begin{tabular}{lccccc}
\toprule
 {Algorithm}   &  {A}   &  {C}   &  {P}   &  {S}   &  { Average} \\
\midrule
IRM   & 84.8 $\pm$ 1.3   & 76.4 $\pm$ 1.1   & 96.7 $\pm$ 0.6   & 76.1 $\pm$ 1.0   & 83.5  \\
GroupDRO     & 83.5 $\pm$ 0.9   & 79.1 $\pm$ 0.6   & 96.7 $\pm$ 0.3   & 78.3 $\pm$ 2.0   & 84.4  \\
MLDG  & 85.5 $\pm$ 1.4   & 80.1 $\pm$ 1.7   & 97.4 $\pm$ 0.3   & 76.6 $\pm$ 1.1   & 84.9  \\
CORAL & 88.3 $\pm$ 0.2   & 80.0 $\pm$ 0.5   & 97.5 $\pm$ 0.3   & 78.8 $\pm$ 1.3   & 86.2  \\
MMD   & 86.1 $\pm$ 1.4   & 79.4 $\pm$ 0.9   & 96.6 $\pm$ 0.2   & 76.5 $\pm$ 0.5   & 84.6  \\
DANN  & 86.4 $\pm$ 0.8   & 77.4 $\pm$ 0.8   & 97.3 $\pm$ 0.4   & 73.5 $\pm$ 2.3   & 83.6  \\
CDANN & 84.6 $\pm$ 1.8   & 75.5 $\pm$ 0.9   & 96.8 $\pm$ 0.3   & 73.5 $\pm$ 0.6   & 82.6  \\
MTL   & 87.5 $\pm$ 0.8   & 77.1 $\pm$ 0.5   & 96.4 $\pm$ 0.8   & 77.3 $\pm$ 1.8   & 84.6  \\
SagNet   & 87.4 $\pm$ 1.0   & 80.7 $\pm$ 0.6   & 97.1 $\pm$ 0.1   & 80.0 $\pm$ 0.4   & 86.3  \\
ARM   & 86.8 $\pm$ 0.6   & 76.8 $\pm$ 0.5   & 97.4 $\pm$ 0.3   & 79.3 $\pm$ 1.2   & 85.1  \\
VREx  & 86.0 $\pm$ 1.6   & 79.1 $\pm$ 0.6   & 96.9 $\pm$ 0.5   & 77.7 $\pm$ 1.7   & 84.9  \\
RSC   & 85.4 $\pm$ 0.8   & 79.7 $\pm$ 1.8   & 97.6 $\pm$ 0.3   & 78.2 $\pm$ 1.2   & 85.2  \\
AND-mask   & 85.3 $\pm$ 1.4   & 79.2 $\pm$ 2.0   & 96.9 $\pm$ 0.4   & 76.2 $\pm$ 1.4   & 84.4  \\
SelfReg & 87.9 $\pm$ 1.0   & 79.4 $\pm$ 1.4   & 96.8 $\pm$ 0.7   & 78.3 $\pm$ 1.2   &  85.6     \\
mDSDI  & 87.7 $\pm$ 0.4   & 80.4 $\pm$ 0.7   & \textbf{98.1} $\pm$ 0.3   & 78.4 $\pm$ 1.2   & 86.2  \\
Fishr   & 88.4 $\pm$ 0.2   & 78.7 $\pm$ 0.7   & 97.0 $\pm$ 0.1   & 77.8 $\pm$ 2.0   & 85.5  \\
\midrule
ERM   & 84.7 $\pm$ 0.4   & 80.8 $\pm$ 0.6   & 97.2 $\pm$ 0.3   & 79.3 $\pm$ 1.0   & 85.5  \\
ERM-NU (ours)    & 87.4 $\pm$ 0.5   & 79.6 $\pm$ 0.9   & 96.3 $\pm$ 0.7   & 79.0 $\pm$ 0.5   & 85.6  \\
\midrule
Mixup & 86.1 $\pm$ 0.5   & 78.9 $\pm$ 0.8   & 97.6 $\pm$ 0.1   & 75.8 $\pm$ 1.8   & 84.6  \\
Mixup-NU (ours)  & 86.7 $\pm$ 0.3   & 78.0 $\pm$ 1.3   & 97.3 $\pm$ 0.3   & 77.3 $\pm$ 2.0   & 84.8  \\
\midrule
SWAD   & \underline{89.3} $\pm$ 0.2   & \textbf{83.4} $\pm$ 0.6   & 97.3 $\pm$ 0.3   & \underline{82.5} $\pm$ 0.5   & \underline{88.1}  \\
SWAD-NU (ours)    & \textbf{89.8} $\pm$ 1.1   & \underline{82.8} $\pm$ 1.0   & \underline{97.7} $\pm$ 0.3   & \textbf{83.7} $\pm$ 1.1   & \textbf{88.5}  \\
\bottomrule
\end{tabular}}
\end{center}
\caption{\footnotesize Results on PACS. 
    }
    \label{tab:pacs}
\end{table}

\begin{table}[htb]
\begin{center}
\adjustbox{max width=\textwidth}{%
\begin{tabular}{lccccc}
\toprule
 {Algorithm}   &  {A}   &  {C}   &  {P}   &  {R}   &  { Average} \\
\midrule
IRM   & 58.9 $\pm$ 2.3   & 52.2 $\pm$ 1.6   & 72.1 $\pm$ 2.9   & 74.0 $\pm$ 2.5   & 64.3  \\
GroupDRO     & 60.4 $\pm$ 0.7   & 52.7 $\pm$ 1.0   & 75.0 $\pm$ 0.7   & 76.0 $\pm$ 0.7   & 66.0  \\
MLDG  & 61.5 $\pm$ 0.9   & 53.2 $\pm$ 0.6   & 75.0 $\pm$ 1.2   & 77.5 $\pm$ 0.4   & 66.8  \\
CORAL & 65.3 $\pm$ 0.4   & 54.4 $\pm$ 0.5   & 76.5 $\pm$ 0.1   & 78.4 $\pm$ 0.5   & 68.7  \\
MMD   & 60.4 $\pm$ 0.2   & 53.3 $\pm$ 0.3   & 74.3 $\pm$ 0.1   & 77.4 $\pm$ 0.6   & 66.3  \\
DANN  & 59.9 $\pm$ 1.3   & 53.0 $\pm$ 0.3   & 73.6 $\pm$ 0.7   & 76.9 $\pm$ 0.5   & 65.9  \\
CDANN & 61.5 $\pm$ 1.4   & 50.4 $\pm$ 2.4   & 74.4 $\pm$ 0.9   & 76.6 $\pm$ 0.8   & 65.8  \\
MTL   & 61.5 $\pm$ 0.7   & 52.4 $\pm$ 0.6   & 74.9 $\pm$ 0.4   & 76.8 $\pm$ 0.4   & 66.4  \\
SagNet   & 63.4 $\pm$ 0.2   & 54.8 $\pm$ 0.4   & 75.8 $\pm$ 0.4   & 78.3 $\pm$ 0.3   & 68.1  \\
ARM   & 58.9 $\pm$ 0.8   & 51.0 $\pm$ 0.5   & 74.1 $\pm$ 0.1   & 75.2 $\pm$ 0.3   & 64.8  \\
VREx  & 60.7 $\pm$ 0.9   & 53.0 $\pm$ 0.9   & 75.3 $\pm$ 0.1   & 76.6 $\pm$ 0.5   & 66.4  \\
RSC   & 60.7 $\pm$ 1.4   & 51.4 $\pm$ 0.3   & 74.8 $\pm$ 1.1   & 75.1 $\pm$ 1.3   & 65.5  \\
AND-mask   & 59.5 $\pm$ 1.1   & 51.7 $\pm$ 0.2   & 73.9 $\pm$ 0.4   & 77.1 $\pm$ 0.2   & 65.6  \\
SelfReg & 63.6 $\pm$ 1.4   & 53.1 $\pm$ 1.0   & 76.9 $\pm$ 0.4   & 78.1 $\pm$ 0.4   & 67.9    \\
mDSDI & 62.4 $\pm$ 0.5   & 54.4 $\pm$ 0.4   & 76.2 $\pm$ 0.5   & 78.3 $\pm$ 0.1   & 67.8  \\
Fishr  & \textbf{68.1} $\pm$ 0.3   & 52.1 $\pm$ 0.4   & 76.0 $\pm$ 0.2   & \underline{80.4} $\pm$ 0.2   & 69.2  \\
\midrule
ERM   & 61.3 $\pm$ 0.7   & 52.4 $\pm$ 0.3   & 75.8 $\pm$ 0.1   & 76.6 $\pm$ 0.3   & 66.5  \\
ERM-NU (ours)     & 63.3 $\pm$ 0.2   & 54.2 $\pm$ 0.3   & 76.7 $\pm$ 0.2   & 78.2 $\pm$ 0.3   & 68.1  \\
\midrule
Mixup & 62.4 $\pm$ 0.8   & 54.8 $\pm$ 0.6   & 76.9 $\pm$ 0.3   & 78.3 $\pm$ 0.2   & 68.1  \\
Mixup-NU (ours)   & 64.3 $\pm$ 0.5   & 55.9 $\pm$ 0.6   & 76.9 $\pm$ 0.4   & 78.0 $\pm$ 0.6   & 68.8  \\
\midrule
SWAD    & 66.1 $\pm$ 0.4   & \underline{57.7} $\pm$ 0.4   & \underline{78.4} $\pm$ 0.1   & 80.2 $\pm$ 0.2   & \underline{70.6}  \\
SWAD-NU (ours)   & \underline{67.5} $\pm$ 0.3   & \textbf{58.4} $\pm$ 0.6   & \textbf{78.6} $\pm$ 0.9   & \textbf{80.7} $\pm$ 0.1   & \textbf{71.3}  \\ 
\bottomrule
\end{tabular}}
\end{center}
\caption{\footnotesize Results on OfficeHome. 
    }
    \label{tab:office}
\end{table}

\begin{table}[htb]
\begin{center}
\adjustbox{max width=\textwidth}{%
\begin{tabular}{lccccc}
\toprule
 {Algorithm}   &  {L100}   &  {L38}    &  {L43}    &  {L46}    &  {Average}    \\
\midrule
IRM    &  {54.6} $\pm$ 1.3  & 39.8 $\pm$ 1.9  & 56.2 $\pm$ 1.8  & 39.6 $\pm$ 0.8  & 47.6   \\
GroupDRO    & 41.2 $\pm$ 0.7  & 38.6 $\pm$ 2.1  & 56.7 $\pm$ 0.9  & 36.4 $\pm$ 2.1  & 43.2   \\
MLDG   & 54.2 $\pm$ 3.0  &  {44.3} $\pm$ 1.1  & 55.6 $\pm$ 0.3  & 36.9 $\pm$ 2.2  & 47.7   \\
CORAL  & 51.6 $\pm$ 2.4  & 42.2 $\pm$ 1.0  & 57.0 $\pm$ 1.0  & 39.8 $\pm$ 2.9  & 47.6   \\
MMD    & 41.9 $\pm$ 3.0  & 34.8 $\pm$ 1.0  & 57.0 $\pm$ 1.9  & 35.2 $\pm$ 1.8  & 42.2   \\
DANN   & 51.1 $\pm$ 3.5  & 40.6 $\pm$ 0.6  & 57.4 $\pm$ 0.5  & 37.7 $\pm$ 1.8  & 46.7   \\
CDANN  & 47.0 $\pm$ 1.9  & 41.3 $\pm$ 4.8  & 54.9 $\pm$ 1.7  & 39.8 $\pm$ 2.3  & 45.8   \\
MTL    & 49.3 $\pm$ 1.2  & 39.6 $\pm$ 6.3  & 55.6 $\pm$ 1.1  & 37.8 $\pm$ 0.8  & 45.6   \\
SagNet & 53.0 $\pm$ 2.9  & 43.0 $\pm$ 2.5  &  {57.9} $\pm$ 0.6  & 40.4 $\pm$ 1.3  &  {48.6}   \\
ARM    & 49.3 $\pm$ 0.7  & 38.3 $\pm$ 2.4  & 55.8 $\pm$ 0.8  & 38.7 $\pm$ 1.3  & 45.5   \\
VREx   & 48.2 $\pm$ 4.3  & 41.7 $\pm$ 1.3  & 56.8 $\pm$ 0.8  & 38.7 $\pm$ 3.1  & 46.4   \\
RSC    & 50.2 $\pm$ 2.2  & 39.2 $\pm$ 1.4  & 56.3 $\pm$ 1.4  &  {40.8} $\pm$ 0.6  & 46.6   \\
AND-mask    & 50.0 $\pm$ 2.9  & 40.2 $\pm$ 0.8  & 53.3 $\pm$ 0.7  & 34.8 $\pm$ 1.9  & 44.6   \\
SelfReg    & 48.8 $\pm$ 0.9  & 41.3 $\pm$ 1.8  & 57.3 $\pm$ 0.7  & 40.6 $\pm$ 0.9  & 47.0   \\
mDSDI  & 53.2 $\pm$ 3.0  & 43.3 $\pm$ 1.0  & 56.7 $\pm$ 0.5  & 39.2 $\pm$ 1.3  & 48.1   \\
Fishr    & 50.2 $\pm$ 3.9  & 43.9 $\pm$ 0.8  & 55.7 $\pm$ 2.2  & 39.8 $\pm$ 1.0  & 47.4   \\
\midrule
ERM    & 49.8 $\pm$ 4.4  & 42.1 $\pm$ 1.4  & 56.9 $\pm$ 1.8  & 35.7 $\pm$ 3.9  & 46.1   \\
ERM-NU (ours)    & 52.5 $\pm$ 1.2  &  {45.0} $\pm$ 0.5  &  \underline{60.2} $\pm$ 0.2  &  {40.7} $\pm$ 1.0  &  {49.6}   \\
\midrule
Mixup  &  \textbf{59.6} $\pm$ 2.0  & 42.2 $\pm$ 1.4  & 55.9 $\pm$ 0.8  & 33.9 $\pm$ 1.4  & 47.9   \\
Mixup-NU (ours)  &  {55.1} $\pm$ 3.1  & \underline{45.8} $\pm$ 0.7  & 56.4 $\pm$ 1.2  & \underline{41.1} $\pm$ 0.6  & 49.6   \\
\midrule
SWAD  & 55.4 $\pm$ 0.0  & 44.9 $\pm$ 1.1  & 59.7 $\pm$ 0.4  & 39.9 $\pm$ 0.2  & \underline{50.0}   \\
SWAD-NU (ours)  &  \underline{58.1} $\pm$ 3.3  & \textbf{47.7} $\pm$ 1.6  & \textbf{60.5} $\pm$ 0.8  & \textbf{42.3} $\pm$ 0.9  & \textbf{52.2}   \\
\bottomrule
\end{tabular}}
\end{center}
    \caption{\footnotesize Results on Terra Incognita. 
    }
    \label{tab:terra}
\end{table}

\begin{table*}[htb]
    \begin{center}
\adjustbox{max width=\textwidth}{%
\begin{tabular}{lccccccc}
\toprule
 {Algorithm}   &  {clip}  &  {info}  &  {paint} &  {quick} &  {real}  &  {sketch}   &  {Average}   \\
\midrule
IRM  & 48.5 $\pm$ 2.8 & 15.0 $\pm$ 1.5 & 38.3 $\pm$ 4.3 & 10.9 $\pm$ 0.5 & 48.2 $\pm$ 5.2 & 42.3 $\pm$ 3.1 & 33.9 \\
GroupDRO & 47.2 $\pm$ 0.5 & 17.5 $\pm$ 0.4 & 33.8 $\pm$ 0.5 & 9.3 $\pm$ 0.3  & 51.6 $\pm$ 0.4 & 40.1 $\pm$ 0.6 & 33.3 \\
MLDG & 59.1 $\pm$ 0.2 & 19.1 $\pm$ 0.3 & 45.8 $\pm$ 0.7 &  {13.4} $\pm$ 0.3 & 59.6 $\pm$ 0.2 & 50.2 $\pm$ 0.4 & 41.2 \\
CORAL &  {59.2} $\pm$ 0.1 & 19.7 $\pm$ 0.2 & 46.6 $\pm$ 0.3 &  {13.4} $\pm$ 0.4 & 59.8 $\pm$ 0.2 & 50.1 $\pm$ 0.6 & 41.5 \\
MMD  & 32.1 $\pm$ 13.3   & 11.0 $\pm$ 4.6 & 26.8 $\pm$ 11.3   & 8.7 $\pm$ 2.1  & 32.7 $\pm$ 13.8   & 28.9 $\pm$ 11.9   & 23.4 \\
DANN & 53.1 $\pm$ 0.2 & 18.3 $\pm$ 0.1 & 44.2 $\pm$ 0.7 & 11.8 $\pm$ 0.1 & 55.5 $\pm$ 0.4 & 46.8 $\pm$ 0.6 & 38.3 \\
CDANN & 54.6 $\pm$ 0.4 & 17.3 $\pm$ 0.1 & 43.7 $\pm$ 0.9 & 12.1 $\pm$ 0.7 & 56.2 $\pm$ 0.4 & 45.9 $\pm$ 0.5 & 38.3 \\
MTL  & 57.9 $\pm$ 0.5 & 18.5 $\pm$ 0.4 & 46.0 $\pm$ 0.1 & 12.5 $\pm$ 0.1 & 59.5 $\pm$ 0.3 & 49.2 $\pm$ 0.1 & 40.6 \\
SagNet  & 57.7 $\pm$ 0.3 & 19.0 $\pm$ 0.2 & 45.3 $\pm$ 0.3 & 12.7 $\pm$ 0.5 & 58.1 $\pm$ 0.5 & 48.8 $\pm$ 0.2 & 40.3 \\
ARM  & 49.7 $\pm$ 0.3 & 16.3 $\pm$ 0.5 & 40.9 $\pm$ 1.1 & 9.4 $\pm$ 0.1  & 53.4 $\pm$ 0.4 & 43.5 $\pm$ 0.4 & 35.5 \\
VREx & 47.3 $\pm$ 3.5 & 16.0 $\pm$ 1.5 & 35.8 $\pm$ 4.6 & 10.9 $\pm$ 0.3 & 49.6 $\pm$ 4.9 & 42.0 $\pm$ 3.0 & 33.6 \\
RSC  & 55.0 $\pm$ 1.2 & 18.3 $\pm$ 0.5 & 44.4 $\pm$ 0.6 & 12.2 $\pm$ 0.2 & 55.7 $\pm$ 0.7 & 47.8 $\pm$ 0.9 & 38.9 \\
AND-mask & 52.3 $\pm$ 0.8 & 16.6 $\pm$ 0.3 & 41.6 $\pm$ 1.1 & 11.3 $\pm$ 0.1 & 55.8 $\pm$ 0.4 & 45.4 $\pm$ 0.9 & 37.2 \\
SelfReg  & 58.5 $\pm$ 0.1 &  {20.7} $\pm$ 0.1 & 47.3 $\pm$ 0.3 & 13.1 $\pm$ 0.3 & 58.2 $\pm$ 0.2 &  {51.1} $\pm$ 0.3 & 41.5 \\
mDSDI & 62.1 $\pm$ 0.3 & 19.1 $\pm$ 0.4 &  {49.4} $\pm$ 0.4 & 12.8 $\pm$ 0.7 &  {62.9} $\pm$ 0.3 & 50.4 $\pm$ 0.4 &  {42.8} \\
Fishr  & 58.2 $\pm$ 0.5 & 20.2 $\pm$ 0.2 &  {47.7} $\pm$ 0.3 & 12.7 $\pm$ 0.2 &  {60.3} $\pm$ 0.2 & 50.8 $\pm$ 0.1 &  {41.7} \\
\midrule
ERM  & 58.1 $\pm$ 0.3 & 18.8 $\pm$ 0.3 & 46.7 $\pm$ 0.3 & 12.2 $\pm$ 0.4 & 59.6 $\pm$ 0.1 & 49.8 $\pm$ 0.4 & 40.9 \\
ERM-NU (ours)  &  {60.9} $\pm$ 0.0 &  {21.1} $\pm$ 0.2 &  {49.9} $\pm$ 0.3 &  {13.7} $\pm$ 0.2 &  {62.5} $\pm$ 0.2 &  {52.5} $\pm$ 0.4 &  {43.4} \\
\midrule
Mixup & 55.7 $\pm$ 0.3 & 18.5 $\pm$ 0.5 & 44.3 $\pm$ 0.5 & 12.5 $\pm$ 0.4 & 55.8 $\pm$ 0.3 & 48.2 $\pm$ 0.5 & 39.2 \\
Mixup-NU (ours) & 59.5 $\pm$ 0.3 & 20.5 $\pm$ 0.1 & 49.3 $\pm$ 0.4 & 13.3 $\pm$ 0.5 & 59.6 $\pm$ 0.3 & 51.5 $\pm$ 0.2 & 42.3 \\
\midrule
SWAD  & \underline{66.0} $\pm$ 0.1 & \underline{22.4} $\pm$ 0.3 & \underline{53.5} $\pm$ 0.1 & \underline{16.1} $\pm$ 0.2 &  \underline{65.8} $\pm$ 0.4 & \underline{55.5} $\pm$ 0.3 & \underline{46.5} \\
SWAD-NU (ours)   & \textbf{66.6} $\pm$ 0.2 & \textbf{23.2} $\pm$ 0.2 & \textbf{54.3} $\pm$ 0.2 & \textbf{16.2} $\pm$ 0.2 & \textbf{66.1} $\pm$ 0.6 & \textbf{56.2} $\pm$ 0.2 & \textbf{47.1} \\ 
\bottomrule
\end{tabular}}
\end{center}
    \caption{\footnotesize Results on DomainNet. 
    }
    \label{tab:domain_net}
\end{table*}


\end{document}